\documentclass{article}
\usepackage{dsfont}    
\usepackage{amsmath}
\DeclareMathOperator*{\argmin}{arg\,min}
\DeclareMathOperator*{\argmax}{arg\,max}
\newcommand{\E}{\mathbb{E}}
\allowdisplaybreaks

\usepackage{microtype}
\usepackage{graphicx}
\usepackage{subcaption}
\usepackage{booktabs} 

\usepackage{hyperref}

\usepackage[preprint]{icml2026}

\usepackage{amssymb}
\usepackage{mathtools}
\usepackage{amsthm}

\usepackage[capitalize,noabbrev]{cleveref}

\theoremstyle{plain}
\newtheorem{theorem}{Theorem}[section]

\newtheorem{lemma}[theorem]{Lemma}

\theoremstyle{definition}

\newtheorem{assumption}[theorem]{Assumption}
\theoremstyle{remark}

\usepackage[disable,textsize=tiny]{todonotes}

\icmltitlerunning{Linear and Neural Dueling Bandits with Delayed Feedback}

\begin{document}

\twocolumn[
  \icmltitle{Linear and Neural Dueling Bandits with Delayed Feedback}




\begin{icmlauthorlist}
  \icmlauthor{Xiangyi Wang}{cuhksz}
  \icmlauthor{Pingchen Lu}{cuhksz}
  \icmlauthor{Jie Mao}{cuhksz}
  \icmlauthor{Mingze Kong}{cuhksz}
  \icmlauthor{Zhi Hong}{cuhksz}
  \icmlauthor{Zhiyong Wang}{cuhk}
  \icmlauthor{Zhongxiang Dai}{cuhksz}
\end{icmlauthorlist}

\icmlaffiliation{cuhksz}{The Chinese University of Hong Kong, Shenzhen}
\icmlaffiliation{cuhk}{The Chinese University of Hong Kong}

\icmlcorrespondingauthor{Zhiyong Wang}{zhiyongwang@cuhk.edu.hk}
\icmlcorrespondingauthor{Zhongxiang Dai}{daizhongxiang@cuhk.edu.cn}

  \icmlkeywords{Machine Learning, ICML}

  \vskip 0.3in
]



\newcommand{\zhiyong}[1]{{\color{blue} [\textbf{Zhiyong:} \textnormal{#1}]}}
\printAffiliationsAndNotice{}  
\begin{abstract}Contextual dueling bandits form a cornerstone of preference-based decision-making, with critical applications in recommender systems and large language model alignment. However, standard algorithms rely on the idealized assumption of immediate feedback, a condition frequently violated in real-world scenarios such as prompt optimization. This setting introduces a unique theoretical challenge: unlike linear bandits, dueling bandit estimators lack closed-form solutions, rendering naive adaptations of standard weighting techniques biased. To address this, we formalize the problem of Contextual Dueling Bandits with Stochastic Delayed Feedback and propose two novel algorithms: Linear (LDB-DF) and Neural (NDB-DF) Dueling Bandits with Delayed Feedback. Central to our approach is a novel estimator that integrates an Inverse Probability Weighting (IPW) mechanism directly into the loss function, ensuring unbiased correction for delayed or missing feedback. We provide comprehensive theoretical analysis, establishing an $\tilde{O}(d\sqrt{T})$ regret bound for the linear setting and sub-linear guarantees for the neural setting. Extensive experiments on both simulated and real-world datasets demonstrate the effectiveness of our proposed methods.\end{abstract}

\section{Introduction}
\label{sec:intro}

The contextual multi-armed bandit (MAB) framework serves as a cornerstone of sequential decision-making under uncertainty, finding widespread application in recommendation systems, online advertising, and personalized services \citep{Li2010, Lattimore2020}. While traditional MAB algorithms predicate on absolute numerical rewards, many real-world scenarios---such as calibrating Large Language Models (LLMs) or ranking search results---fundamentally rely on relative preference feedback. In these settings, assigning an absolute scalar score to a response is often cognitively burdensome and noisy for human annotators. In contrast, preference-based feedback, which requires an annotator to simply compare two candidate responses (e.g., $x_1 \succ x_2$), is widely recognized as being more reliable and consistent \citep{Yue2012, Yue2009}. This paradigm is formally captured by the Contextual Dueling Bandit (CDB) framework \citep{Yue2012, Saha2021, Bengs2022}, where the agent selects a pair of arms and updates its policy based on binary preference outcomes.

Despite the theoretical success of CDBs, a critical aspect of real-world deployment remains largely unaddressed: the temporal dynamics of feedback. Standard CDB algorithms typically operate under the assumption that preference feedback is observed immediately following an action. However, in many practical applications, feedback is inherently delayed. For instance, in display advertising and e-commerce, user reactions such as conversions or clicks can lag behind impressions by hours or even days \citep{Chapelle2014}. Similarly, in the optimization of LLMs via human-in-the-loop reinforcement learning (RLHF), human evaluation is asynchronous and slow \citep{lin2024prompt}. Furthermore, feedback systems often impose a ``waiting window''; if a response does not arrive within this window, it is treated as missing data \citep{Vernade2017}.

Ignoring these delays introduces significant bias into reward estimation. While stochastic delayed feedback has been studied in standard MABs and Bayesian Optimization (BO), the unique mathematical structure of dueling bandits renders existing solutions insufficient. For instance, in delayed BO, \citet{Verma2022BO} successfully employed a ``weighting'' strategy where pending observations are temporarily imputed with minimum values. Such approaches are often feasible in linear or kernelized settings where estimators admit closed-form solutions (e.g., ridge regression), allowing the bias from imputation to be theoretically bounded. However, this presents a unique theoretical challenge for preference learning: parameter estimation in contextual dueling bandits---typically formulated via Maximum Likelihood Estimation (MLE) under the Bradley-Terry-Luce model---lacks a closed-form solution and exhibits complex, implicit dependencies on binary outcomes. Consequently, naively adapting standard weighting techniques induces a systematic bias in the optimization landscape that cannot be easily rectified.

To bridge this gap, we formalize the problem of Contextual Dueling Bandits with Stochastic Delayed Feedback and propose a principled solution that departs from heuristic imputation. We introduce two novel algorithms: Linear Dueling Bandits with Delayed Feedback (LDB-DF) for linear rewards, and Neural Dueling Bandits with Delayed Feedback (NDB-DF) to capture complex, non-linear reward landscapes. Central to our approach is a novel estimator that integrates an Inverse Probability Weighting (IPW) mechanism directly into the regularized loss function. By scaling the contribution of observed feedback by the inverse of its arrival probability, this mechanism explicitly ensures that the expected gradient of the surrogate loss remains an unbiased estimator of the true gradient, effectively neutralizing the bias introduced by stochastic observation. We provide a comprehensive theoretical analysis, establishing an $\tilde{O}(d\sqrt{T})$ regret bound for the linear setting and proving sub-linear regret guarantees for the neural setting via Neural Tangent Kernel (NTK) analysis. Extensive experiments on both simulated environments and real-world datasets demonstrate the superior performance of our methods compared to baselines that ignore feedback delays.

\section{Problem Setting}
\label{sec:problem_setting}

 Throughout this paper, we denote vectors using bold lowercase letters (e.g., $\mathbf{x}$) and matrices using bold uppercase letters (e.g., $\mathbf{M}$). Let $|\mathcal{A}|$ represent the cardinality of a set $\mathcal{A}$, and $[m]$ denote the integer set $\{1, 2, \ldots, m\}$. Given a positive semi-definite (PSD) matrix $\mathbf{M}$, we define the weighted matrix norm of a vector $\mathbf{x}$ as $\|\mathbf{x}\|_\mathbf{M} = \sqrt{\mathbf{x}^\top \mathbf{M} \mathbf{x}}$.

 Consider a sequential decision-making scenario where a learning agent interacts with an environment over $T$ rounds. At each time step $t \in [T]$, the agent receives a set of arms $\mathcal{X}_t$ and selects a pair $(x_{t,1}, x_{t,2})$. In a departure from standard bandit settings that provide scalar rewards, we operate under the dueling bandit paradigm, where the agent only receives a binary preference feedback $y_t \in \{0, 1\}$ indicating which arm in the pair is superior. Distinct from standard settings, the preference feedback $y_t$ is not immediately available; instead, it is observed after a random delay $D_t$.

We impose a hard observation threshold $M$. Under this setting, the feedback arrives within $M$ rounds with probability $\rho = \mathbb{P}(D_t \le M)$, while it arrives outside this window (i.e., $D_t > M$) with probability $1-\rho$. Following standard assumptions in delayed feedback literature \citep{Vernade2017, Garg2020}, we assume $\rho$ is known. Consequently, from the agent's perspective, at any given round $t$, we can only utilize the historical data that has already arrived, i.e., $s + D_s \le t$.

\textbf{Modeling Preference Feedback.} We assume the existence of a latent reward function $f(x)$. For any pair of arms $x$ and $x'$, the pairwise preference feedback $y = \mathds{1}(x \succ x')$ is generated by the Bradley-Terry-Luce (BTL) model \citep{Wang2025Clustering}. Specifically, $y$ is sampled from a Bernoulli distribution with probability:
\[
P(x \succ x') = \mu(f(x) - f(x')),
\]
where $\mu(z) = \frac{1}{1+e^{-z}}$ is the logistic link function.

\textbf{Performance Metric.} Consistent with standard contextual dueling bandit literature \citep{Bengs2022, Saha2022}, we evaluate the algorithm using cumulative regret:
\[
    R_T = \sum_{t=1}^T r_t = \sum_{t=1}^T \left( 2f(x^*_t) - f(x_{t,1}) - f(x_{t,2}) \right),
\]
where $x^*_t = \argmax_{x \in \mathcal{X}_t} f(x)$ denotes the optimal arm in round $t$.

\noindent We adopt the following standard assumptions regarding the preference model \citep{Yue2012,Saha2021,Bengs2022,Wang2025Clustering}: %

\begin{assumption}[Standard Assumptions]
\label{ass:standard}
\mbox{}\\
$A1. \ |\mu(f(x)) - \mu(g(x))| \le L_\mu |f(x) - g(x)|$. \\
$A2. \ \forall x \in \mathcal{X}: \min |\dot{\mu}(f(x))| \ge \kappa_\mu > 0$. \\
$A3. \ \|\phi(x_{t,1}) - \phi(x_{t,2})\|_2 \leq L, \forall t = 1, \dots, T$.
\end{assumption}
\noindent The notation $\dot{\mu}(\cdot)$ denotes the first-order derivative of the link function $\mu(\cdot)$. The assumptions listed above are widely adopted in studies involving binary feedback and linear structures. For the specific case where $\mu$ is the logistic function, the Lipschitz constant is $L_\mu = 1/4$. Regarding Assumption (A3), while classical linear bandit analysis typically assumes a bound on the norm of individual feature vectors (i.e., $\|\phi(x)\| \le L$), our condition on the pairwise difference is structurally equivalent. By the triangle inequality, the standard assumption implies ours with a scaling factor of $2L$.

\textbf{Reward Function Realizations.} Following established conventions in contextual bandits \citep{Zhou2020, Zhang2021}, we consider two realizations for the latent reward function $f$:
\subsection{Linear Dueling Bandits With Delayed Feedback}

In the linear setting, we posit that the reward function is linear in a fixed feature space, $f(x) = \theta^\top \phi(x)$, where the feature map $\phi : \mathbb{R}^{d'} \to \mathbb{R}^d$ satisfies $\|\phi(x)\|_2 \le 1$. Under this assumption, the preference probability simplifies to:
\[
    P(x_{t,1} \succ x_{t,2}) = \mu\left(\theta^\top (\phi(x_{t,1}) - \phi(x_{t,2}))\right).
\]
The objective is to estimate the unknown parameter $\theta \in \mathbb{R}^d$. However, the estimator must account for the observing mechanism: at any round $t$, we can only utilize feedback where  $s + D_s \le t$.

\subsection{Neural Dueling Bandits With Delayed Feedback}
In the neural setting,we allow the reward functions $f$ to be non-linear functions.
We approximate $f(x)$ using a fully connected neural network (NN) $h(x; \theta)$ with depth $L \ge 2$, width $m$, and ReLU activations. The network output is defined as:
\[
h(x; \theta) = \mathbf{W}_L \text{ReLU} (\mathbf{W}_{L-1} \text{ReLU} (\cdots \text{ReLU} (\mathbf{W}_1 x))),
\]
where $\mathbf{W}_l$ represents the weight matrices. The parameter vector $\theta$ comprises all flattened weights, and $g(x; \theta) = \nabla_\theta h(x; \theta)$ denotes the gradient.

Analogous to the linear case, the primary challenge lies in training the network using only the subset of feedback available within the observation window. To facilitate theoretical analysis via the Neural Tangent Kernel (NTK) regime, we adopt standard assumptions from the neural bandit literature:
\begin{assumption}[Neural Assumptions]
\label{ass:neural_assumptions}
\mbox{}\\
$B1. \ |f(x)| \le 1, \forall x \in \mathcal{X}$. \\
$B2. \ H \succeq \lambda_0 I, \text{ for some } \lambda_0 > 0$. \\
$B3. \ \forall x \in \mathcal{X}: \|x\|_2 = 1 \text{ and } x_j = x_{j+d/2}$.
\end{assumption}
\noindent The last assumption in Assumption \ref{ass:neural_assumptions}, together with the initialization of $\theta_0$, follows the standard practice in neural bandits \citep{Zhou2020, Zhang2021}, which ensures that $h(x; \theta_0) = 0$ for all $x \in \mathcal{X}_t, t \in [T]$. Moreover, the assumption $x_j = x_{j+d/2}$ is a mild condition commonly used in the neural bandit literature \citep{Zhou2020, Zhang2021} and has been recently adopted in extensions to dueling bandits \citep{Verma2025NDB}. This assumption is primarily for the convenience of regret analysis: for any context $x$ with $\|x\|_2 = 1$, one can always construct a padded context $x' = (x^\top, x^\top)^\top/\sqrt{2}$ that satisfies this requirement \citep{Zhou2020}.

\section{Algorithms}
\label{sec:algorithms}

\subsection{Linear Dueling Bandits with Delayed Feedback (LDB-DF)}

To address the challenge of parameter estimation under stochastic delays, we introduce the Linear Dueling Bandits with Delayed Feedback (LDB-DF) algorithm. The core difficulty in this setting is that naively utilizing available data introduces systematic bias. LDB-DF overcomes this by integrating an unbiased estimator with an optimistic exploration strategy. Specifically, the algorithm relies on an Inverse Probability Weighting (IPW) mechanism to correct the gradient bias in the Maximum Likelihood Estimation.

\textbf{Estimating Reward Parameters with Observed Data (Line \ref{line:ldb_update_set}-Line \ref{line:ldb_compute_theta}).}
Unlike standard approaches that update the model immediately, LDB-DF maintains an estimator derived exclusively from the subset of historical data that has reliably "arrived" within the valid window. At the onset of round $t$, the algorithm first updates the set of observed feedback $\mathcal{O}_t$ (Line~\ref{line:ldb_update_set}).

To formalize the observation mechanism, we define the effective observable feedback $y'_s$ for a past round $s$. From the learner's perspective, any feedback with a delay exceeding the threshold $M$ is strictly treated as zero:
\begin{equation}
    y'_s = y_s \mathds{1}\{D_s \le M\}.
    \label{eq:effective_y}
\end{equation}
Under this model, the observation is governed by the relation $y_s' = \mu(f(x_{s,1}) - f(x_{s,2}))\rho + \epsilon_s$, where $\epsilon_s$ captures the stochasticity of both the preference generation and the random observation. Consequently, a naive Maximum Likelihood Estimation (MLE) update restricted to these raw observations would suffer from significant selection bias: since the effective sample size is systematically reduced by the observation probability $\rho = \mathbb{P}(D_s \le M)$, the estimator would incorrectly treat missing data as non-informative zeros.

To rectify this, we introduce an Inverse Probability Weighting (IPW) strategy. We define the effective weight for the feedback from round $s$ observed at round $t$ as:
\begin{equation}
    \omega_{s,t} \triangleq \frac{\mathds{1}\{D_s \leq \min(M, t-s)\}}{\rho}.
    \label{eq:weight_def}
\end{equation}
By re-weighting observed feedback instances by $\omega_{s,t}$, we compensate for the mass of unobserved instances. Consequently, in Line \ref{line:ldb_compute_theta}, $\theta_t$ is determined by minimizing the following IPW-regularized loss function:
\begin{equation}
\begin{aligned}
    \mathcal{L}_t(\theta) = \;&- \sum_{s=1}^{t} \Big[ \omega_{s,t} y_s \log \mu ( \theta^\top \Delta\phi_s ) + \\
    &\quad \left(1 - \omega_{s,t} y_s\right) \log \mu ( -\theta^\top \Delta\phi_s ) \Big] + \frac{\lambda}{2} \|\theta\|_2^2,
\end{aligned}
\label{eq:ldb_loss}
\end{equation}
where $\Delta\phi_s \triangleq \phi(x_{s,1}) - \phi(x_{s,2})$. The weight $\omega_{s,t}$ effectively neutralizes the bias introduced by the observation mechanism where $D_s > M$.
\begin{algorithm*}[t]
\caption{Linear Dueling Bandits with Delayed Feedback (LDB-DF)}
\label{alg:ldb_df}
\begin{algorithmic}[1]
    \STATE \textbf{Input:}Confidence $\delta \in (0,1)$, Regularization $\lambda$, probability $\rho\in (0,1)$, delay threshold $M$. \label{line:ldb_input}
    \STATE \textbf{Initialize:} $V_0 \triangleq \frac{\lambda}{\kappa_\mu} \mathbf{I}$
    \STATE \textbf{Initialize:} History of arrived feedback $\mathcal{D} \leftarrow \emptyset$. \label{line:ldb_init_hist}
    \FOR{$t=1, \dots, T$} \label{line:ldb_loop_start}
        \STATE Update observed set: $\mathcal{O}_t = \{(s, x_{s,1}, x_{s,2}, y_s) \mid s < t \text{ and feedback arrives at } t\}$. \label{line:ldb_update_set}
        \STATE Compute $\theta_t = \argmin_{\theta} \mathcal{L}_t(\theta)$ via Eq. (\ref{eq:ldb_loss}) using currently available data. \label{line:ldb_compute_theta}
        \STATE Select first arm: $x_{t,1} = \argmax_{x \in \mathcal{X}_t} \theta_t^\top \phi(x)$. \label{line:ldb_select_arm1}
        \STATE Select second arm: $x_{t,2} = \argmax_{x \in \mathcal{X}_t} \left[ \theta_t^\top (\phi(x)-\phi(x_{t,1})) + \frac{\beta_t}{\kappa_\mu} \|\phi(x) - \phi(x_{t,1})\|_{V_{t-1}^{-1}} \right]$. \label{line:ldb_select_arm2}
        \STATE \textbf{Duel:} Play $(x_{t,1}, x_{t,2})$. Feedback $y_t$ is observed after delay $D_t$. \label{line:ldb_duel}
        \STATE Update Matrix: $V_t \leftarrow V_{t-1} + (\phi(x_{t,1}) - \phi(x_{t,2})) (\phi(x_{t,1}) - \phi(x_{t,2}))^\top$. \label{line:ldb_update_matrix}
    \ENDFOR \label{line:ldb_loop_end}
\end{algorithmic}
\end{algorithm*}
Simultaneously, LDB-DF maintains an aggregated information matrix $V_{t-1}$ (updated in Line \ref{line:ldb_update_matrix}) to quantify exploration uncertainty in the feature space:
\begin{equation}
    V_{t-1} = \frac{\lambda}{\kappa_\mu} \mathbf{I} + \sum_{s=1}^{t-1} (\phi(x_{s,1}) - \phi(x_{s,2}))(\phi(x_{s,1}) - \phi(x_{s,2}))^\top.
    \label{eq:ldb_matrix}
\end{equation}
It is crucial to note that while the parameter estimator $\theta_t$ relies on the \textit{arrived} feedback (weighted by $\omega_{s,t}$), the design matrix $V_{t-1}$ is updated using the features of \textit{all} selected pairs up to round $t-1$, as context vectors are observed immediately upon action selection.

\textbf{Arm Recommendation Strategy.}
Leveraging the estimated preference vector $\theta_t$ and the information matrix $V_{t-1}$, LDB-DF selects a pair of arms $(x_{t,1}, x_{t,2})$ from the current set $\mathcal{X}_t$ via the following strategy:
\begin{itemize}
    \setlength\abovedisplayskip{4pt}
    \setlength\belowdisplayskip{4pt}

    \item \textbf{First Arm Selection (Line \ref{line:ldb_select_arm1}).} The first arm is selected greedily to maximize the estimated utility based on the current model $\theta_t$:
    \begin{equation}
        x_{t,1} = \argmax_{x \in \mathcal{X}_t} \theta_t^\top \phi(x).
        \label{eq:ldb_arm1}
    \end{equation}

    \item \textbf{Second Arm Selection (Line \ref{line:ldb_select_arm2}).} The second arm is chosen to balance exploitation and exploration by maximizing an Upper Confidence Bound (UCB) relative to the first arm:
    \begin{equation}
    \begin{aligned}
        x_{t,2} = \argmax_{x \in \mathcal{X}_t} \Big[ &\theta_t^\top \big(\phi(x) - \phi(x_{t,1})\big) + \\
        &\frac{\beta_t}{\kappa_\mu} \big\|\phi(x) - \phi(x_{t,1})\big\|_{V_{t-1}^{-1}} \Big].
    \end{aligned}
    \label{eq:ldb_arm2}
    \end{equation}
    Intuitively, Eq. (\ref{eq:ldb_arm2}) encourages selecting an arm $x$ that either exhibits a strong predicted preference over $x_{t,1}$ (exploitation) or is sufficiently distinct from $x_{t,1}$ within the feature space defined by $V_{t-1}$ (exploration). The confidence radius $\beta_t$ accounts for uncertainty arising from both stochastic noise and feedback delays.
\end{itemize}

\textbf{Updating Interaction History (Line \ref{line:ldb_duel}).}
Upon recommending the pair $(x_{t,1}, x_{t,2})$, the environment generates a binary preference $y_t$ which is not immediately observable. Instead, it arrives after an unknown stochastic delay $D_t$ (Line \ref{line:ldb_duel}). Conceptually, this feedback enters a ``pending" state. It is only incorporated into the observed set $\mathcal{O}_{t'}$ for training in a future round $t'$ once the arrival condition is met (i.e., $t \ge s + D_s$) and provided the delay satisfies the observation constraint $D_t \le M$. Meanwhile, the information matrix $V_t$ is updated immediately using the context vectors of the selected arms (Line \ref{line:ldb_update_matrix}), as feature availability is independent of feedback latency.

\subsection{Neural Dueling Bandits with Delayed Feedback (NDB-DF)}

Recognizing the limitations of linear models in capturing complex reward landscapes, we propose Neural Dueling Bandits with Delayed Feedback (NDB-DF). This algorithm extends our IPW-based approach to non-linear function approximation using neural networks. NDB-DF maintains a neural approximator updated online, crucially learning only from the subset of effectively arrived feedback.

\textbf{Estimating Reward Model with Data (Line \ref{line:ndb_update_set}-Line \ref{line:ndb_train}).}
Similar to the linear setting, NDB-DF trains the neural network using only the subset of feedback that has reliably arrived by the current round $t$ and satisfies the observation constraint $D_s \le M$. To correct for the systematic bias introduced by this selective observation, we employ the same Inverse Probability Weighting (IPW) weights $\omega_{s,t}$ as defined in Eq. (\ref{eq:weight_def}).
Defining the predicted preference difference as $\Delta h_s(\theta) \triangleq h(x_{s,1}; \theta) - h(x_{s,2}; \theta)$, the network parameters $\theta_t$ are obtained by minimizing the following regularized, probability-weighted cross-entropy loss:
\begin{equation}
\begin{split}
    \mathcal{L}_t(\theta) = &- \sum_{s=1}^{t-1} \Bigg[ \omega_{s,t} y_s \log \mu (\Delta h_s(\theta)) \\
    &\qquad + \left(1 - \omega_{s,t} y_s\right) \log \mu (-\Delta h_s(\theta)) \Bigg] \\
    &+ \frac{\lambda}{2} \|\theta - \theta_0\|_2^2.
\end{split}
\label{eq:ndb_loss}
\end{equation}
where $\theta_0$ represents the initialized parameters. The term $\omega_{s,t}$ ensures the estimator remains unbiased despite stochastic observation. Additionally, we compute the aggregated information matrix $V_{t-1}$ (updated in Line \ref{line:ndb_update_matrix}) using Neural Tangent Kernel (NTK) features:
\begin{equation}
    V_{t-1} = \frac{\lambda}{\kappa_\mu} \mathbf{I} + \frac{1}{m} \sum_{s=1}^{t-1} g'_s(\theta_0) g'_s(\theta_0)^\top,
    \label{eq:ndb_matrix}
\end{equation}
where $g'_s(\theta_0) \triangleq g(x_{s,1}; \theta_0) - g(x_{s,2}; \theta_0)$ denotes the gradient difference at initialization.

\textbf{Arm Recommendation Strategy (Line \ref{line:ndb_arm1}-Line \ref{line:ndb_arm2}).}
Leveraging the trained network $h(\cdot;\theta_t)$ and the information matrix $V_{t-1}$, NDB-DF recommends two arms as follows:
\begin{itemize}
    \setlength\abovedisplayskip{4pt}
    \setlength\belowdisplayskip{4pt}

    \item \textbf{First Arm Selection (Line \ref{line:ndb_arm1}).} The first arm is selected greedily based on the current neural point estimate to maximize the predicted reward:
    \begin{equation}
        x_{t,1} = \argmax_{x\in\mathcal{X}_t} h(x;\theta_t).
        \label{eq:ndb_arm1}
    \end{equation}

    \item \textbf{Second Arm Selection (Line \ref{line:ndb_arm2}).} The second arm is selected via an optimistic strategy to encourage exploration. We choose the arm that maximizes the Upper Confidence Bound (UCB) of the preference difference relative to $x_{t,1}$:
    \begin{equation}
        x_{t,2} = \argmax_{x\in\mathcal{X}_t}
        \Bigl[h(x;\theta_t) + \nu_T\,\sigma_{t-1}(x,x_{t,1})\Bigr].
        \label{eq:ndb_arm2}
    \end{equation}
    Here, the scaling factor $\nu_T$ is explicitly defined as:
    \begin{equation}
        \nu_T \triangleq \left(\beta_T + B\sqrt{\frac{\lambda}{\kappa_\mu}} + 1\right)\frac{\kappa_\mu}{\lambda},
    \end{equation}
    where $\beta_T$ represents the confidence radius.
    The exploration bonus $\sigma_{t-1}$ quantifies the epistemic uncertainty of the preference difference in the NTK feature space:
    \begin{equation}
        \sigma_{t-1}(x,x') \triangleq
        \sqrt{\frac{\lambda}{\kappa_\mu}}
        \left\|
        \frac{1}{\sqrt{m}}\bigl(g(x;\theta_0)-g(x';\theta_0)\bigr)
        \right\|_{V_{t-1}^{-1}} .
        \label{eq:ndb_bonus}
    \end{equation}
\end{itemize}
\textbf{Updating Interaction History (Line \ref{line:ndb_duel}-Line \ref{line:ndb_update_matrix}).}
Similar to the linear setting, the preference feedback $y_t$ is observed after a stochastic delay $D_t$ (Line \ref{line:ndb_duel}) and is added to the training set $\mathcal{O}_{t'}$ only upon arrival (provided $D_t \le M$). The information matrix $V_t$, however, is updated immediately. Crucially, unlike the linear case which uses raw features, NDB-DF updates $V_t$ using the gradient difference at initialization $g'_{t}(\theta_0)$ (Line \ref{line:ndb_update_matrix}) to capture the geometry of the neural tangent kernel independent of feedback latency.

\begin{algorithm*}[t]
   \caption{Neural Dueling Bandits with Delayed Feedback (NDB-DF)}
   \label{alg:ndb_ucb}
   \begin{algorithmic}[1]
        \STATE \textbf{Input:} Confidence $\delta \in (0,1)$, probability $\rho\in (0,1)$,regularization $\lambda > 0$, width $M > 0$. \label{line:ndb_input}
        \STATE \textbf{Initialize:} $V_0 \triangleq \frac{\lambda}{\kappa_\mu} \mathbf{I}$
        \STATE \textbf{Initialize:} History of arrived feedback $\mathcal{D} \leftarrow \emptyset$. \label{line:ndb_init}
        \FOR{$t = 1, \dots, T$} \label{line:ndb_loop_start}
            \STATE Update observed set: $\mathcal{O}_t = \{(s, x_{s,1}, x_{s,2}, y_s) \mid s < t \text{ and feedback arrives at } t\}$. \label{line:ndb_update_set}
            \STATE Let $\mathcal{D}_t$ be the cumulative set of all observations arrived by round $t$. \label{line:ndb_cumulative}
            \STATE Train NN to find $\theta_t$ by minimizing $\mathcal{L}_t(\theta)$ in Eq. (\ref{eq:ndb_loss}). \label{line:ndb_train}
            \STATE Select first arm: $x_{t,1} = \argmax_{x \in \mathcal{X}_t} h(x; \theta_t)$. \label{line:ndb_arm1}
            \STATE Select second arm: $x_{t,2} = \argmax_{x \in \mathcal{X}_t} [h(x; \theta_t) + \nu_T \sigma_{t-1}(x, x_{t,1})]$. \label{line:ndb_arm2}
            \STATE \textbf{Duel:} Play $(x_{t,1}, x_{t,2})$. Feedback $y_t$ is observed after delay $D_t$. \label{line:ndb_duel}
            \STATE Update Matrix: $V_t \leftarrow V_{t-1}+\frac{1}{m} g'_{t-1}(\theta_0) g'_{t-1}(\theta_0)^\top$. \label{line:ndb_update_matrix}
        \ENDFOR \label{line:ndb_loop_end}
   \end{algorithmic}
\end{algorithm*}

\section{Theoretical Analysis}
\label{sec:analysis}

This section is dedicated to the theoretical analysis of our algorithms, where we establish rigorous regret bounds and discuss their implications.
\subsection{Regret Analysis for Linear Setting (LDB-DF)}

The following theorem establishes an upper bound on the cumulative regret for LDB-DF.

\begin{theorem}
\label{thm:linear_regret}
Suppose Assumption \ref{ass:standard} holds. Let $\lambda > \kappa_\mu L^2$. With probability at least $1-\delta$ for some $\delta\in(0,1)$, the cumulative regret of LDB-DF satisfies:
\begin{align}
    R_T &= \tilde{O}\left( \frac{1}{\rho\kappa_\mu} \left( \sqrt{d} + M \right) d \sqrt{T} \right) \nonumber \\
    &= \tilde{O}\left( \frac{M}{\rho\kappa_\mu} d \sqrt{T} \right).
\end{align}
\end{theorem}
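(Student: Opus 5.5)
The proof has three parts: a confidence bound on $\theta_t-\theta$, a per-round regret decomposition from the arm-selection rules, and an elliptical-potential sum. Throughout, write $\Delta\phi_s=\phi(x_{s,1})-\phi(x_{s,2})$ and $\theta^*$ for the true parameter.

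\textbf{Step 1: the gradient equation.} The first-order condition for Eq.~(\ref{eq:ldb_loss}) is
\[
\sum_{s\le t}\big(\mu(\theta_t^\top\Delta\phi_s)-\omega_{s,t}y_s\big)\Delta\phi_s+\lambda\theta_t=0 .
\]
Subtract the same expression evaluated at $\theta^*$ and apply the mean value theorem. This gives
\[
G_t(\theta_t-\theta^*)=\sum_{s}\big(\omega_{s,t}y_s-\mu(\theta^{*\top}\Delta\phi_s)\big)\Delta\phi_s-\lambda\theta^* .
\]
By A2, $G_t\succeq \kappa_\mu V_{t-1}$ up to the regularizer; the condition $\lambda>\kappa_\mu L^2$ absorbs the one-step mismatch between the sum up to $t$ and $V_{t-1}$.

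\textbf{Step 2: splitting the noise.} Decompose the right-hand side as
\[
\omega_{s,t}y_s-\mu_s=\Big(\tfrac{\mathds 1\{D_s\le M\}}{\rho}y_s-\mu_s\Big)-\tfrac{\mathds 1\{t-s<D_s\le M\}}{\rho}y_s .
\]
The first term is a martingale difference with conditional mean zero; this is exactly where the IPW weighting enters. It is bounded by $1/\rho$, so it is $(1/\rho)$-sub-Gaussian. The self-normalized bound of Abbasi-Yadkori et al.\ then gives
\[
\Big\|\sum_s(\cdot)\Delta\phi_s\Big\|_{V_{t-1}^{-1}}\le \tfrac1\rho\sqrt{2\log(1/\delta)+d\log(1+tL^2\kappa_\mu/(d\lambda))}=\tilde O(\sqrt d/\rho).
\]
The second term is nonzero only for the at most $M$ most recent rounds $s\in(t-M,t]$. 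Its $V_{t-1}^{-1}$-norm is therefore at most $\tfrac1\rho\sum_{s=t-M}^{t}\|\Delta\phi_s\|_{V_{t-1}^{-1}}$. Crudely, with $\|\Delta\phi_s\|_{V^{-1}}\le L\sqrt{\kappa_\mu/\lambda}\le 1$, this is at most $M/\rho$.

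Combining the two terms yields
\[
\|\theta_t-\theta^*\|_{V_{t-1}}\le \beta_t/\kappa_\mu,\qquad \beta_t=\tilde O\big((\sqrt d+M)/\rho\big)+\sqrt{\lambda}\|\theta^*\|.
\]
The $+1$ in the theorem's $(\sqrt d+M)$ factor comes from this step.

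\textbf{Step 3: per-round regret.} Use the standard argument from linear dueling UCB (e.g.\ Bengs et al.; Verma et al.). On the good event, optimality of $x_{t,1}$ for $\theta_t$ and the UCB choice of $x_{t,2}$ give two bounds:
\[
f(x_t^*)-f(x_{t,1})\le \theta_t^\top(\phi(x_t^*)-\phi(x_{t,1}))+\tfrac{\beta_t}{\kappa_\mu}\|\phi(x_t^*)-\phi(x_{t,1})\|_{V_{t-1}^{-1}}\le \tfrac{\beta_t}{\kappa_\mu}\cdot\text{(UCB value of }x_{t,2}),
\]
\[
f(x_t^*)-f(x_{t,2})\le\big[f(x_t^*)-f(x_{t,1})\big]+\big[f(x_{t,1})-f(x_{t,2})\big].
\]
Each bracket is bounded by a constant multiple of $\tfrac{\beta_t}{\kappa_\mu}\|\Delta\phi_t\|_{V_{t-1}^{-1}}$. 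Hence $r_t\le \tfrac{c\,\beta_T}{\kappa_\mu}\min\{1,\|\Delta\phi_t\|_{V_{t-1}^{-1}}\}$, using $\beta_t\le\beta_T$.

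\textbf{Step 4: summation.} Cauchy--Schwarz and the elliptical potential lemma give
\[
\sum_t\|\Delta\phi_t\|_{V_{t-1}^{-1}}\le\sqrt{T\cdot 2d\log(1+TL^2\kappa_\mu/(d\lambda))}.
\]
Therefore $R_T=\tilde O\big(\tfrac{1}{\rho\kappa_\mu}(\sqrt d+M)\sqrt d\sqrt T\big)$, which is at most the stated $\tilde O\big(\tfrac{1}{\rho\kappa_\mu}(\sqrt d+M)d\sqrt T\big)$ since $\sqrt d\le d$. Absorbing $\sqrt d\le M$, or treating $d$ as a constant inside the $M$ term, gives the second form.

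\textbf{Main obstacle.} The hard part is Step 2: handling the bias from feedback that is still pending, together with the fact that the MLE has no closed form. The linearization via the mean value theorem handles the latter, but one must make sure the Hessian lower bound $\kappa_\mu$ holds along the segment between $\theta_t$ and $\theta^*$. I would invoke A2 as holding uniformly, or else project $\theta_t$ onto a bounded set as in Faury et al.

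The weights $\omega_{s,t}y_s$ can exceed 1, so the loss may be non-convex in sign terms. It is still convex in $\theta$, because its Hessian is $\sum\dot\mu\,\Delta\phi\Delta\phi^\top+\lambda I$ regardless of the labels. The minimizer therefore exists and the first-order condition in Step 1 is valid.
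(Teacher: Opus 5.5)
Your proposal follows essentially the same route as the paper: strong monotonicity of the gradient map via the mean value theorem and A2, an add-and-subtract split of the IPW residual into a $1/\rho$-scaled martingale term (handled by Theorem~\ref{thm:abbasi_yadkori}) plus a delay bias supported on the last $M$ rounds, the standard two-arm UCB regret argument, and the elliptical potential bound. Two small refinements over the paper: folding the $1/\rho$ into $\beta_t$ makes your confidence width match the bonus $\frac{\beta_t}{\kappa_\mu}$ actually used in Algorithm~\ref{alg:ldb_df}, whereas the paper's regret step uses $\frac{\beta_t}{\rho\kappa_\mu}$; and your sharper intermediate bound $(\sqrt{d}+M)\sqrt{d}\sqrt{T}$ gives the second form directly via $d+M\sqrt{d}\le 2Md$ for $M\ge 1$, so your hedge about absorbing $\sqrt{d}\le M$ is unnecessary.
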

The regret bound in Theorem \ref{thm:linear_regret} explicitly characterizes the impact of feedback latency. The term $\frac{M}{\rho}$ acts as a multiplicative factor, underscoring two key intuitions. A smaller observation probability $\rho$ implies that the effective sample size is reduced. Consequently, the variance of the estimator increases, scaling the regret by $1/\rho$. Increasing the threshold $M$ reduces data observation. A larger $M$ implies tolerating longer feedback delays, which necessitates a wider confidence interval to bound the uncertainty, resulting in a linear penalty on $M$. This exacerbates the systematic bias in the estimator, linearly increasing the regret.
 In the standard setting where feedback is immediate and lossless (i.e., $\rho=1$ and $M=1$), our regret bound simplifies to $\tilde{O}(d\sqrt{T}/\kappa_\mu)$. This fully aligns with the state-of-the-art regret bounds for standard linear dueling bandits established in prior work \citep{Saha2021, Bengs2022, Li2024}, demonstrating that LDB-DF is a generalization of existing methods.

 The proof hinges on bounding the estimation error $\|\theta - \theta_t\|_{V_t}$ under delayed feedback. A primary challenge lies in managing the composite error introduced by the IPW estimator. To address this, we employ a novel ``add-and-subtract" decomposition technique. Specifically, we decompose the gradient error into a martingale noise term and a systematic bias term, then analyze them separately.
\subsection{Regret Analysis for Neural Setting (NDB-DF)}

We now extend our analysis to the neural setting.

\begin{theorem}
\label{thm:neural_regret}
Suppose Assumption \ref{ass:neural_assumptions} holds. Let $m$ be sufficiently large. With probability at least $1-\delta$ for some $\delta\in(0,1)$, the cumulative regret of NDB-DF satisfies:
\begin{align}
    R_T &= \tilde{O}\left( \left( \frac{\sqrt{\tilde{d}}}{\rho\kappa_\mu} + B \sqrt{\frac{\lambda}{\kappa_\mu}} + \frac{M}{\kappa_\mu m \rho} \right) \sqrt{\tilde{d} T} \right) \nonumber \\
    &= \tilde{O}\left( \frac{M}{\rho\kappa_\mu} \tilde{d} \sqrt{T} \right),
\end{align}
where $\tilde{d}$ is the effective dimension of the Neural Tangent Kernel (NTK) matrix.
\end{theorem}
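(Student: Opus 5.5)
The plan follows the standard neural dueling bandit analysis (NTK linearization plus a confidence ellipsoid), adapted to the IPW loss in Eq.~(\ref{eq:ndb_loss}) and reusing the add-and-subtract decomposition from the proof of Theorem~\ref{thm:linear_regret}.

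\textbf{Step 1: reduce to a linear problem.} By Assumption~\ref{ass:neural_assumptions} and the NTK results of \citet{Zhou2020, Zhang2021}, for $m$ large enough there is a $\theta^*$ with $\|\theta^*-\theta_0\|_2 \le B/\sqrt{m}$ such that $f(x) = \langle g(x;\theta_0), \theta^*-\theta_0\rangle$ for all arms, where $B \approx \sqrt{2 \mathbf{h}^\top H^{-1}\mathbf{h}}$. In addition, the trained iterates $\theta_t$ stay in a ball of radius $O(\sqrt{t/(m\lambda)})$ around $\theta_0$. Inside this ball, $h(x;\theta)$ and its linearization $\langle g(x;\theta_0),\theta-\theta_0\rangle$ differ by a term that vanishes as $m$ grows, and the gradients $g(x;\theta_t)$ and $g(x;\theta_0)$ are close. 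These approximation errors make up the lower-order terms that vanish with $m$.

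\textbf{Step 2: confidence bound for the IPW estimator.} I compare the first-order condition of Eq.~(\ref{eq:ndb_loss}) at $\theta_t$ with that of the true linearized parameter. By the mean value theorem and Assumption A2, the difference of gradients is bounded below by $\kappa_\mu V_{t-1}$ applied to $\theta_t-\theta^*$, up to linearization error. The residual is then split, exactly as in the linear case, into two parts.
\begin{itemize}
\item \emph{Martingale term.} This is $\sum_{s<t}(\omega_{s,t'}y_s - \mu(f(x_{s,1})-f(x_{s,2})))g'_s(\theta_0)/\sqrt{m}$, taken with respect to the eventual weights. Since $\mathbb{E}[\omega y\mid\mathcal{F}]=\mu(\cdot)$ and $|\omega y|\le 1/\rho$, the noise is $1/\rho$-sub-Gaussian. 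The self-normalized bound then gives $O\big(\rho^{-1}\sqrt{\tilde d\log(1/\delta)}\big)$, using $\log\det V_{t-1}\lesssim \tilde d$.
\item \emph{Bias term.} This comes from pending feedback with $t-s<M$: at most $M$ rounds, each contributing at most $1/\rho$ times $\|g'_s/\sqrt{m}\|_{V^{-1}}$. This yields the $M/(\kappa_\mu m\rho)$-type term, where the $1/m$ comes from the gradient scaling in Eq.~(\ref{eq:ndb_matrix}).
\end{itemize}
Combining the two gives $\|\theta_t-\theta^*\|_{V_{t-1}}\sqrt{m} \le \beta_T/\kappa_\mu + B\sqrt{\lambda/\kappa_\mu} + (\text{approx.})$. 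Together with Step 1, this yields, for all pairs $(x,x')$,
\[
|(f(x)-f(x'))-(h(x;\theta_t)-h(x';\theta_t))| \le \nu_T\,\sigma_{t-1}(x,x') + \epsilon_m,
\]
which is exactly the quantity used in the selection rules in Eqs.~(\ref{eq:ndb_arm1}) and~(\ref{eq:ndb_arm2}).

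\textbf{Step 3: per-round regret.} Following \citet{Verma2025NDB}, I write $r_t = [f(x_t^*)-f(x_{t,1})]+[f(x_t^*)-f(x_{t,2})]$. I bound each bracket using the optimality of $x_{t,1}$ for $h(\cdot;\theta_t)$, the optimality of $x_{t,2}$ for the UCB objective, and the confidence bound from Step 2. This gives $r_t \le c\,\nu_T\big(\sigma_{t-1}(x_{t,2},x_{t,1})+\sigma_{t-1}(x_t^*,x_{t,1})\big)+O(\epsilon_m)$. The term involving $x_t^*$ is controlled by the choice of $x_{t,2}$, so only bonuses of the played pair remain.

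\textbf{Step 4: sum over rounds.} I apply Cauchy--Schwarz and the elliptical potential lemma, $\sum_t \min(1,\sigma_{t-1}^2)\lesssim \tilde d\log T$. Multiplying $\nu_T$ by $\sqrt{\tilde d T}$ gives the stated bound; the $\epsilon_m T$ term vanishes for sufficiently large $m$.

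\textbf{Main obstacle.} The hard step is Step 2: the weights $\omega_{s,t}$ change with $t$, and the MLE is implicit. I would first define an oracle loss that uses the final weights $\omega_{s,\infty}$, under which the noise is a genuine martingale. I would then treat the at most $M$ pending terms as a bounded perturbation of the gradient. Inverting this perturbation through $\kappa_\mu V_{t-1}$ must be done in the NTK-linearized space while keeping $\theta_t$ in the small ball around $\theta_0$, which requires $m$ to be polynomially large in $T$, $1/\rho$ and $M$.
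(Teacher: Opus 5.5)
Your plan follows the paper's proof almost step for step. It uses the same NTK lemmas: the representation $f=\langle g(\cdot;\theta_0),\theta_f-\theta_0\rangle$ with $\sqrt{m}\|\theta_f-\theta_0\|_2\le B$, the $O(\sqrt{t/(m\lambda)})$ drift, gradient stability, and linearization error. It uses the same monotone map $G_t$ inverted through $\kappa_\mu V_{t-1}$. It uses the same add-and-subtract split of $G_t(\theta_t)$ into a martingale term, an at-most-$M$-term delay bias, and a residual that the first-order condition reduces to $A_1+A_2$. The per-round chain and the elliptical-potential summation are also the same. Your ``oracle loss with eventual weights'' is just the paper's $\epsilon_s/\rho$, with $\epsilon_s=y_s\mathds{1}\{D_s\le M\}-\rho\,\mu(f(x_{s,1})-f(x_{s,2}))$.

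\textbf{The bias-term $1/m$ fails.} In $G_t(\theta_t)$ the bias carries exactly the same prefactor $1/(m\rho)$ as the martingale term. You (like the paper) turn the martingale term into $\beta_T/\rho$ with no leftover $1/m$: you write $\frac1m\phi'_s=\frac{1}{\sqrt m}\cdot\frac{\phi'_s}{\sqrt m}$ and multiply the error by $\sqrt m$. Apply the identical conversion to the bias, using your own count (at most $M$ pending rounds $\mathcal{P}_t$, each contributing $\frac1\rho\|\phi'_s/\sqrt m\|_{V_{t-1}^{-1}}$). This gives
\begin{align*}
\frac{\sqrt{m}}{\kappa_\mu}\Bigl\|\frac{1}{m\rho}\sum_{s\in\mathcal{P}_t} y_s\bigl(\mathds{1}\{D_s\le\min(M,t-s-1)\}-\mathds{1}\{D_s\le M\}\bigr)\phi'_s\Bigr\|_{V_{t-1}^{-1}}
&\le \frac{M}{\rho\kappa_\mu}\max_{s\in\mathcal{P}_t}\Bigl\|\frac{\phi'_s}{\sqrt{m}}\Bigr\|_{V_{t-1}^{-1}} \\
&\le \frac{M}{\rho\kappa_\mu}\sqrt{\frac{\kappa_\mu}{\lambda}}\,O(1)
= O\Bigl(\frac{M}{\rho\sqrt{\kappa_\mu\lambda}}\Bigr).
\end{align*}
The second step uses $V_{t-1}\succeq\frac{\lambda}{\kappa_\mu}\mathbf{I}$ and the gradient bound $\|\phi'_s/\sqrt m\|_2=O(1)$, a depth-dependent constant. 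The $1/m$ normalization in \eqref{eq:ndb_matrix} was already spent in passing to $\phi'_s/\sqrt m$, so invoking it again double-counts. The paper's \eqref{eq:neural_bias_bound} makes exactly the same slip; here you reproduce the paper rather than depart from it.

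\textbf{Consequences.} Neither your argument nor the paper's establishes the first displayed form with $M/(\kappa_\mu m\rho)$. What the route actually delivers is
\[
\tilde{O}\Bigl(\bigl(\sqrt{\tilde d}/(\rho\kappa_\mu)+B\sqrt{\lambda/\kappa_\mu}+M/(\rho\sqrt{\kappa_\mu\lambda})\bigr)\sqrt{\tilde d T}\Bigr).
\]
Its delay term is the neural counterpart of the $ML/(\rho\kappa_\mu)$ term in the linear radius, and it does not vanish as $m\to\infty$. For $\lambda\ge\kappa_\mu$ and $M\ge1$ this still implies the simplified $\tilde{O}(M\tilde d\sqrt T/(\rho\kappa_\mu))$, so the second form survives.

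\textbf{A related loose end.} Step~3 uses the optimality of $x_{t,2}$, so the bonus coefficient in the selection rule must dominate the confidence width. That width includes the $1/\rho$ on $\beta_T$ and the delay term. The $\nu_T$ defined for Algorithm~\ref{alg:ndb_ucb} contains neither. Your claim that the width is ``exactly the quantity used in the selection rules'' therefore holds only after $\nu_T$ in the algorithm is enlarged accordingly; the paper's restated theorem makes this enlargement silently.
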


 Mirroring the linear case, the neural regret bound scales linearly with the factor $\frac{M}{\rho}$, capturing the dual sources of difficulty inherent to delayed feedback. Specifically, the inverse dependence on $\rho$ reflects the variance inflation in the IPW estimator caused by a reduced effective sample size. Meanwhile, the linear dependence on $M$ arises because tolerating longer feedback delays necessitates a wider confidence interval to bound the uncertainty, thereby imposing a linear penalty. This confirms that the fundamental hardness of the problem---governed by the trade-off between observation likelihood and delay magnitude---persists regardless of the reward function's complexity. When $\rho=1$ and $M=1$, our result simplifies to $\tilde{O}(\tilde{d}\sqrt{T}/\kappa_\mu)$, aligning with the bounds for Neural Dueling Bandits with immediate feedback derived by \citet{Verma2025NDB}.
 The analysis proceeds in the Neural Tangent Kernel (NTK) regime. We employ a crucial ``add-and-subtract" decomposition strategy on the gradient error. This decomposition allows us to explicitly bound the delay-induced bias by $\frac{M}{\kappa_\mu m \rho}$.

\section{Experiments}
\label{sec:experiments}

In this section, we empirically evaluate the performance of our proposed algorithms, LDB-DF (Linear) and NDB-DF (Neural). We compare them against baselines that do not explicitly handle the bias caused by delays, demonstrating the necessity of our IPW-based delay-handling mechanism.

\subsection{Experimental Setup}

We conduct experiments on both controlled synthetic environments and a large-scale real-world prompt optimization benchmark.

\paragraph{Synthetic Environments.}
To rigorously test the theoretical properties of our algorithms, we generate synthetic data with varying reward structures.

\noindent Linear Setting: We generate a ground-truth preference vector $\theta^ \in \mathbb{R}^d$ uniformly at random on the unit sphere. The reward function is strictly linear: $f(x) = \theta^{\top} x$.

\noindent Non-Linear (Neural) Settings: To evaluate the capacity of NDB-DF to capture complex dependencies, we design two non-linear environments: (1) a Quadratic setting where $f(x) = (\theta^\top x)^2$, and (2) a Cubic setting where $f(x) = (\theta^\top x)^3$.
In all synthetic experiments, we set the arm set size $K=20$ and feature dimension $d=20$. The feedback delay $D_t$ is sampled from a Geometric distribution to simulate stochastic network latency.

\textbf{Real-world Data.}
To demonstrate practical applicability, we apply our contextual dueling bandit framework to the task of Automated Prompt Optimization\citep{fernando2023promptbreeder,lin2023use,chen2024instructzero}. The goal is to identify the optimal prompt that elicits the highest quality response from a Large Language Model (LLM).
We conduct 29 instruction induction tasks from \textbf{InstructZero}\cite{chen2024instructzero}, covering a wide range of settings, including sentiment analysis, reasoning, and translation.
Following the experimental protocol of \cite{lin2024prompt}, we formulate the prompt optimization task as a contextual bandit problem. For each dataset, we generate a candidate pool of $K=500$ prompts. The reward of a selected prompt (arm) is calculated as its accuracy on a held-out validation set of size $N=50$.


\subsection{Baselines}

We compare our proposed methods against two categories of baselines to validate the effectiveness of our bias-correction mechanism.

\noindent LDB-Ignore / NDB-Ignore: These are naive baselines that utilize the same underlying model architectures (Linear or Neural) as our proposed methods but simply disregard delayed feedback. Pending observations are treated as missing and are excluded from the training set until they arrive, without any importance weighting.

\noindent LDB-Heuristic / NDB-Heuristic: These baselines adopt the ``hallucination'' strategy adapted from delayed Bayesian Optimization \citep{Verma2022BO}. Instead of ignoring pending feedback, these methods impute the missing outcome using the current model's predicted preference probability (i.e., the posterior mean estimate). This predicted soft label is then substituted into the loss function to update the estimator, thereby utilizing the pending data albeit with a biased proxy.

Comparing against \textit{Ignore} baselines isolates the benefit of utilizing delayed data, while comparing against \textit{Heuristic} baselines demonstrates the necessity of our unbiased IPW correction versus biased imputation.
\subsection{Results and Analysis}
We report the cumulative regret as the primary metric. For the Prompt Optimization task, we present the average cumulative regret across all 29 Instruction-Induction tasks to ensure the results are statistically robust and not driven by outliers.
\paragraph{Linear Setting.}
Figure \ref{fig:linear_results} presents the performance of LDB-DF compared to the baselines.
In the Synthetic environment (Fig. \ref{fig:linear_results}a), LDB-DF significantly outperforms both LDB-Ignore and LDB-Heuristic. Both baselines suffer from significantly higher regret compared to our method. This demonstrates that neither disregarding delayed feedback (LDB-Ignore) nor naively imputing it with biased predictions (LDB-Heuristic) is sufficient for effective learning. Our IPW-based approach effectively bridges this gap by providing an unbiased correction.
In the Real-world Prompt Optimization task (Fig. \ref{fig:linear_results}b), LDB-DF consistently achieves the lowest cumulative regret averaged across the 29 datasets. The shaded error region indicates that this performance gain is statistically significant. This confirms that a linear approximation with proper unbiased delay handling is far more effective than the suboptimal strategies employed by the baselines.
\begin{figure}[t]
    \centering
    \begin{subfigure}[b]{0.9\linewidth}
        \centering
        \includegraphics[width=\linewidth]{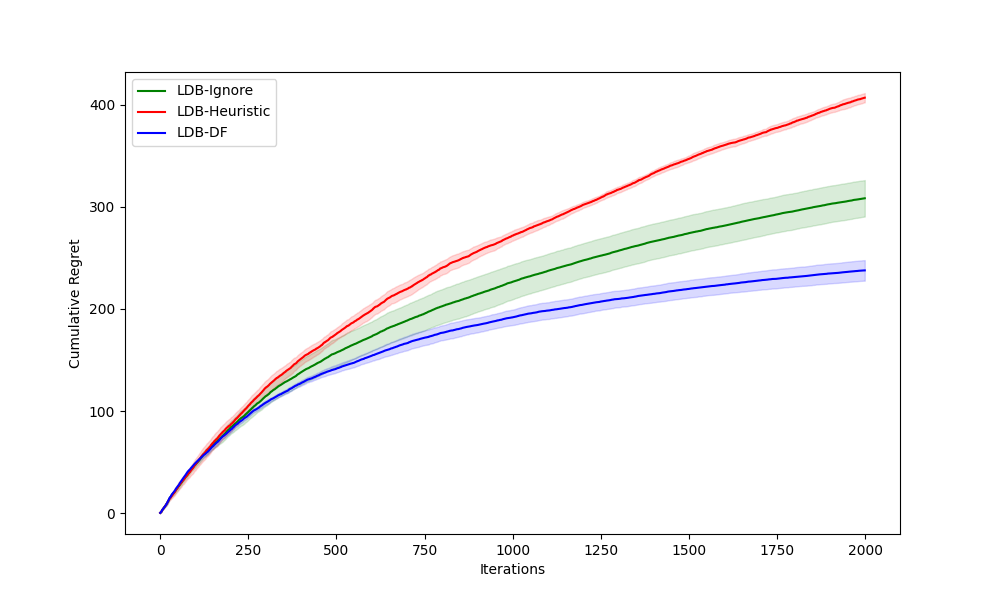}
        \caption{Synthetic ($d=20$)}
        \label{fig:linear_synth}
    \end{subfigure}
    \hfill
    \begin{subfigure}[b]{0.9\linewidth}
        \centering
        \includegraphics[width=\linewidth]{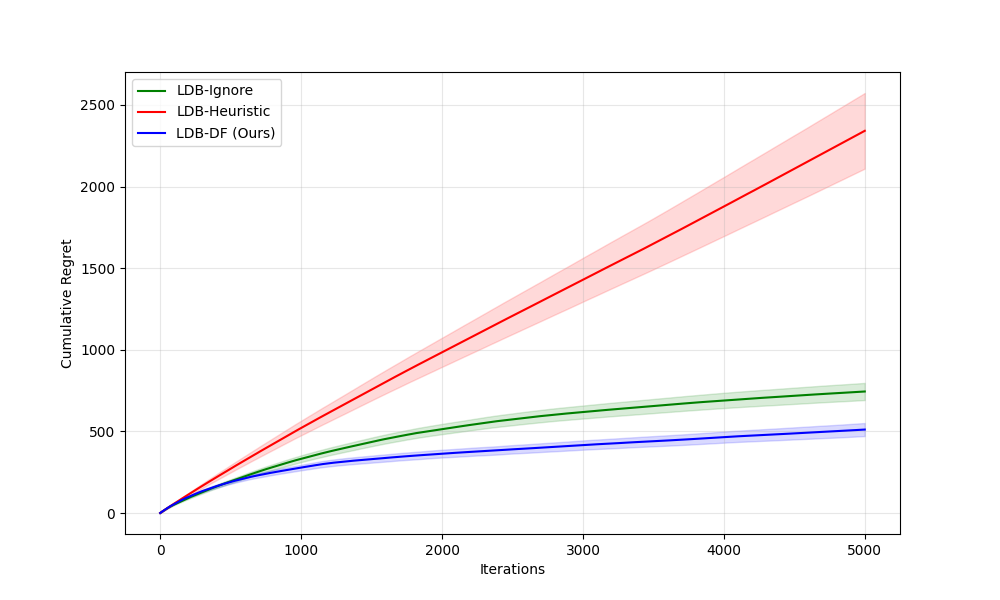}
        \caption{Prompt Opt. (Avg over 29 datasets)}
        \label{fig:linear_real}
    \end{subfigure}
    \caption{\textbf{Linear Results.} Comparison of LDB-DF (Ours) vs. LDB-Ignore. (a) Synthetic data. (b) Aggregated performance on Prompt Optimization.}
    \label{fig:linear_results}
\end{figure}
\paragraph{Neural Setting.}
Figure \ref{fig:neural_results} summarizes the results for NDB-DF.
In the Non-linear Synthetic environments (Fig. \ref{fig:neural_results}a \& \ref{fig:neural_results}b), NDB-DF demonstrates superior convergence speed and lower final regret compared to both NDB-Ignore and NDB-Heuristic in Quadratic and Cubic settings.
Both baselines often converge to suboptimal regions. The naive strategies fail to capture the complex reward landscape: NDB-Ignore suffers from data sparsity, while NDB-Heuristic is misled by self-reinforcing prediction errors (hallucinations). In contrast, NDB-DF remains robust to feedback latency and consistently identifies the optimal arm.
Similarly, in the Prompt Optimization tasks (Fig. \ref{fig:neural_results}c), NDB-DF effectively navigates the semantic search space. By using a neural approximator with our delay-aware objective, it captures non-linear nuances in prompt embeddings and significantly outperforms the biased and naive baselines.

\begin{figure}[t]
    \centering
    
    \begin{subfigure}[b]{0.83\linewidth}
        \centering
        \includegraphics[width=\linewidth]{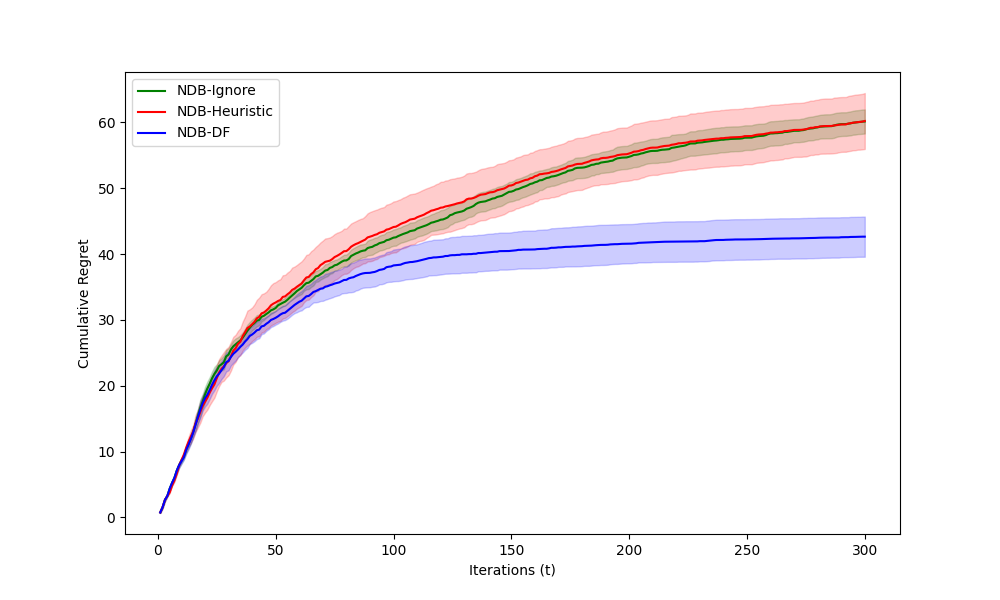}
        \caption{Quadratic}
        \label{fig:neural_quad}
    \end{subfigure}

    \begin{subfigure}[b]{0.83\linewidth}
        \centering
        \includegraphics[width=\linewidth]{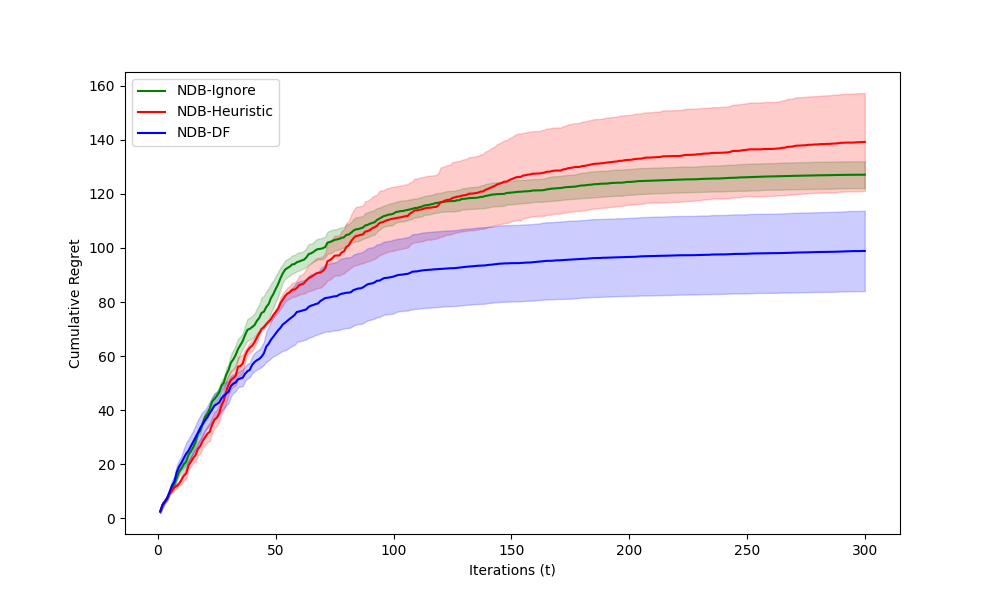}
        \caption{Cubic}
        \label{fig:neural_cubic}
    \end{subfigure}

    \begin{subfigure}[b]{0.83\linewidth}
        \centering
        \includegraphics[width=\linewidth]{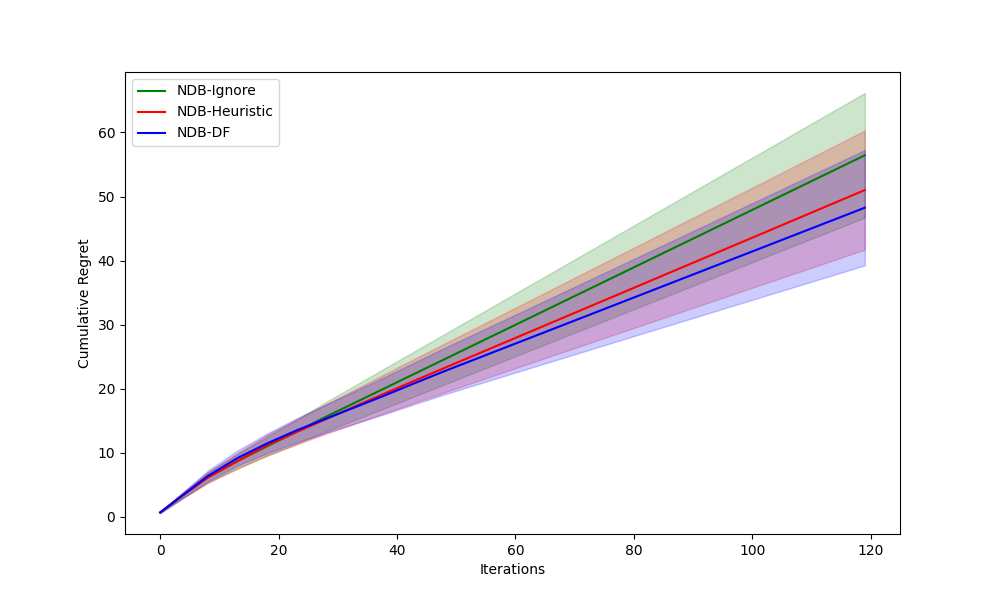}
        \caption{Prompt Opt.}
        \label{fig:neural_real}
    \end{subfigure}

    \caption{\textbf{Neural Results.} Comparison of NDB-DF (Ours) vs. NDB-Ignore. (a) Quadratic synthetic. (b) Cubic synthetic. (c) Aggregated Neural Prompt Optimization.}
    \label{fig:neural_results}
\end{figure}


\section{Related Work}
\label{sec:related_work}

Our work is situated at the intersection of contextual dueling bandits, neural bandits, and online learning with delayed feedback.

\subsection{Contextual Dueling Bandits}
The Dueling Bandit framework, where the learner receives relative preference feedback (e.g., $A \succ B$) rather than absolute rewards, was introduced by \citet{Yue2009, Yue2012} to model scenarios like information retrieval. Early algorithms focused on the $K$-armed setting \citep{Zoghi2014, Komiyama2015}. To handle large or continuous action spaces, the problem was extended to \textit{Contextual} Dueling Bandits, typically assuming a linear relationship between the context vectors and the latent utility function \citep{Saha2021, Bengs2022, Saha2022}. Recently, \citet{Li2024} and \citet{Di2023} have further improved the regret bounds for stochastic contextual dueling bandits, refining the theoretical understanding of preference-based learning. However, all these works operate under the assumption of \textit{immediate} feedback. They fail to account for the stochastic delays inherent in human-in-the-loop systems or complex evaluation pipelines, which can lead to suboptimal decision-making in practice.

\subsection{Neural Bandits}
To address the limitations of linear models in complex real-world applications, Neural Bandits were proposed, utilizing deep neural networks to approximate non-linear reward functions \citep{Zhou2020, Zhang2021}. This direction has seen rapid development, with extensions to various settings such as shallow exploration \citep{Xu2020} and batch learning \citep{Dai2022}. Most relevant to our work is the recent proposal of Neural Dueling Bandits by \citet{Verma2024}, which combines the expressiveness of neural networks with preference-based feedback. While \citet{Verma2024} effectively addresses the non-linearity of reward functions, it does not address the temporal aspect of data collection, specifically the challenges posed by latency. Our NDB-DF algorithm extends this line of research by explicitly modeling and correcting for stochastic delays and missing data within the neural training process, offering a more robust solution for real-world deployment.

\subsection{Learning with Delayed Feedback}
Delayed feedback is a pervasive challenge in online learning, extensively studied in the context of display advertising \citep{Chapelle2014} and conversion optimization \citep{Vernade2017}. \citet{Joulani2013} provided a foundational analysis for general online learning with delays. In the specific context of bandits, \citet{Zhou2019} and \citet{Vernade2017} developed algorithms for Generalized Linear Bandits that handle stochastic delays and missing data (where feedback is lost if it arrives after a threshold). \citet{Garg2020} and \citet{Grover2018} further explored best-arm identification and parallel experimentation under delay.
However, existing delayed bandit algorithms predominantly focus on absolute numerical rewards (standard MAB or Linear Bandits). To the best of our knowledge, no prior work has established a unified framework that simultaneously handles preference feedback (dueling), non-linear function approximation (neural), and stochastic delays/missing. Our LDB-DF and NDB-DF algorithms bridge this gap, providing a comprehensive solution that integrates these three critical components.
\section{Conclusion} \label{sec:conclusion}

In this work, we addressed the practical challenge of Contextual Dueling Bandits under Stochastic Delayed Feedback, a setting pervasive in human-in-the-loop optimization. We formalized the problem and proposed two novel algorithms, LDB-DF and NDB-DF, which leverage Inverse Probability Weighting (IPW) to correct for censorship bias. Our theoretical analysis establishes the first regret bounds for this setting, explicitly characterizing the trade-off between observation latency and sample complexity in both linear and neural regimes.

Empirically, our methods demonstrate significant robustness against stochastic delays, consistently outperforming baselines across synthetic environments and large-scale Prompt Optimization benchmarks. These results validate our framework as a principled solution for asynchronous preference learning. Future directions include extending this approach to general delay distributions and exploring applications to preference-based reinforcement learning with feedback delays.
\section*{Impact Statement}
This paper presents work whose goal is to advance the field of Machine Learning, specifically in sequential decision-making under realistic constraints like latency. Our proposed framework for Prompt Optimization could significantly enhance the efficiency of Large Language Models in various downstream tasks. There are many potential societal consequences of our work, none of which we feel must be specifically highlighted here.







\newpage
\appendix
\onecolumn
\raggedbottom

\section{Overview and Notation}
\label{app:overview}

This appendix provides complete proofs of the theoretical results stated in the main text. We organize the material as follows:
\begin{itemize}
    \item \Cref{app:linear_proofs}: Full proof of the regret bound for the linear setting (Theorem~\ref{thm:linear_regret}).
    \item \Cref{app:neural_proofs}: Full proof of the regret bound for the neural setting (Theorem~\ref{thm:neural_regret}).
\end{itemize}

\paragraph{Notation Recap.}
For the reader's convenience, we summarize the key notation used throughout the proofs:
\begin{center}
\renewcommand{\arraystretch}{1.3}
\begin{tabular}{c l}
\toprule
\textbf{Symbol} & \textbf{Definition} \\
\midrule
$\theta$ & True (unknown) reward parameter \\
$\theta_t$ & Estimated parameter at round $t$ (via IPW-weighted MLE) \\
$\phi(x)$ & Feature map of arm $x$ \\
$\widetilde{\phi}_s$ & Pairwise feature difference: $\phi(x_{s,1}) - \phi(x_{s,2})$ \\
$\mu(\cdot)$ & Logistic link function: $\mu(z) = 1/(1+e^{-z})$ \\
$\dot{\mu}(\cdot)$ & Derivative of the link function \\
$\kappa_\mu$ & Lower bound on $|\dot{\mu}|$ (Assumption~\ref{ass:standard}, A2) \\
$L_\mu$ & Lipschitz constant of $\mu$ (Assumption~\ref{ass:standard}, A1) \\
$L$ & Upper bound on $\|\widetilde{\phi}_s\|_2$ (Assumption~\ref{ass:standard}, A3) \\
$D_s$ & Stochastic delay for round $s$ \\
$M$ & Hard observation threshold \\
$\rho$ & Observation probability: $\mathbb{P}(D_s \le M)$ \\
$\omega_{s,t}$ & IPW weight: $\mathds{1}\{D_s \le \min(M, t-s)\}/\rho$ \\
$V_t$ & Information matrix at round $t$ \\
$\beta_t$ & Confidence radius at round $t$ \\
\bottomrule
\end{tabular}
\end{center}

\paragraph{Key Technical Tool.}
A central ingredient in both the linear and neural proofs is the self-normalized martingale inequality due to \citet{Abbasi2011}. We state it here for reference:
\begin{theorem}[Self-Normalized Bound {\citep[Theorem~1]{Abbasi2011}}]
\label{thm:abbasi_yadkori}
Let $\{\mathcal{F}_t\}_{t=0}^{\infty}$ be a filtration. Let $\{\eta_t\}_{t=1}^{\infty}$ be a real-valued stochastic process adapted to $\{\mathcal{F}_t\}$ such that $\eta_t$ is conditionally $R$-sub-Gaussian given $\mathcal{F}_{t-1}$. Let $\{X_t\}_{t=1}^{\infty}$ be an $\mathbb{R}^d$-valued stochastic process with $X_t$ being $\mathcal{F}_{t-1}$-measurable. Define $\bar{V}_t = V + \sum_{s=1}^t X_s X_s^\top$ for a positive definite matrix $V$. Then, for any $\delta > 0$, with probability at least $1 - \delta$:
\begin{equation}
\label{eq:abbasi_main}
    \left\| \sum_{s=1}^t \eta_s X_s \right\|_{\bar{V}_t^{-1}}^2
    \le 2R^2 \log\!\left( \frac{\det(\bar{V}_t)^{1/2} \det(V)^{-1/2}}{\delta} \right).
\end{equation}
\end{theorem}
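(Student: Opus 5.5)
The plan is the classical method-of-mixtures argument. Write $S_t = \sum_{s=1}^t \eta_s X_s$ and $\bar V_t - V = \sum_{s\le t} X_s X_s^\top$. For a fixed vector $x\in\mathbb{R}^d$, define
\[
M_t^x = \exp\Big( \tfrac{1}{R}\, x^\top S_t - \tfrac12 \|x\|^2_{\bar V_t - V} \Big) = \prod_{s=1}^t \exp\Big( \tfrac{\eta_s x^\top X_s}{R} - \tfrac12 (x^\top X_s)^2 \Big).
\]

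\textbf{Step 1: each $M_t^x$ is a supermartingale.} Since $X_s$ is $\mathcal{F}_{s-1}$-measurable, I apply the conditional $R$-sub-Gaussian property with parameter $\lambda = x^\top X_s / R$:
\[
\mathbb{E}\big[\exp(\lambda\eta_s)\mid \mathcal{F}_{s-1}\big] \le \exp\big(\lambda^2R^2/2\big).
\]
This shows each factor has conditional expectation at most $1$. Hence $\{M_t^x\}$ is a nonnegative supermartingale with $\mathbb{E}[M_t^x]\le 1$ for every $t$, by the tower property.

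\textbf{Step 2: mix over $x$.} Integrate against a Gaussian prior $x\sim\mathcal{N}(0,V^{-1})$ to get $M_t = \int M_t^x\, dh(x)$. By Tonelli, $\mathbb{E}[M_t]\le 1$, and $M_t$ is again a nonnegative supermartingale.

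Then compute the integral explicitly by completing the square in $x$:
\[
\tfrac1R x^\top S_t - \tfrac12 x^\top(\bar V_t - V)x - \tfrac12 x^\top V x = \tfrac{1}{2R^2}\|S_t\|^2_{\bar V_t^{-1}} - \tfrac12\big\|x - \bar V_t^{-1}S_t/R\big\|^2_{\bar V_t}.
\]
The remaining Gaussian integral contributes the ratio of normalizing constants $(\det V)^{1/2}(\det \bar V_t)^{-1/2}$. This yields
\[
M_t = \Big(\frac{\det V}{\det \bar V_t}\Big)^{1/2}\exp\Big(\frac{1}{2R^2}\|S_t\|^2_{\bar V_t^{-1}}\Big).
\]

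\textbf{Step 3: Markov.} Markov's inequality gives $\mathbb{P}(M_t\ge 1/\delta)\le\delta$. On the complement, $M_t < 1/\delta$; taking logarithms and rearranging yields exactly \eqref{eq:abbasi_main}.

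\textbf{Uniformity in time.} If a bound holding simultaneously for all $t$ is wanted, as in Abbasi-Yadkori et al., I would replace $t$ by a stopping time $\tau$. The supermartingale convergence theorem gives that $M_\infty=\lim M_t$ exists a.s., and Fatou's lemma gives $\mathbb{E}[M_\tau]\le 1$. I then choose $\tau$ as the first time the inequality fails, and Markov's inequality shows this event has probability at most $\delta$.

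\textbf{Main obstacle.} I expect the delicate step to be the mixture computation, that is, correctly completing the square and tracking the determinant factors. A close second is the measure-theoretic justification of $\mathbb{E}[M_\tau]\le1$ for unbounded stopping times. For the latter I would first try the truncation $\tau\wedge n$ combined with Fatou as $n\to\infty$.
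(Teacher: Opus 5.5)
Your proposal is correct and is essentially the standard method-of-mixtures proof from \citet{Abbasi2011}. It builds the nonnegative supermartingale $M_t^x$ from conditional sub-Gaussianity, mixes over a Gaussian with covariance $V^{-1}$, and completes the square to get $M_t=(\det V/\det\bar V_t)^{1/2}\exp(\|S_t\|^2_{\bar V_t^{-1}}/(2R^2))$. It then finishes with Markov, plus the stopping-time/Fatou argument for uniformity in $t$. The paper gives no proof of its own and simply cites that result. Keeping the stopping-time step is worthwhile, because the paper later invokes the bound simultaneously for all $t\in[T]$, even though the statement as written is for a fixed $t$.
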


We also make use of the following standard determinant-trace inequality \citep[Lemma~10]{Abbasi2011}:
\begin{lemma}[Determinant-Trace Inequality {\citep[Lemma~10]{Abbasi2011}}]
\label{lem:det_trace}
If $\|X_s\|_2 \le L$ for all $s$, then $\det(\bar{V}_t) \le \left(\frac{\mathrm{tr}(V) + tL^2}{d}\right)^d$.
\end{lemma}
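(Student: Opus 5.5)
The statement is Lemma~\ref{lem:det_trace}. I would prove it by applying the arithmetic--geometric mean inequality to the eigenvalues of $\bar{V}_t$, and then computing the trace of $\bar{V}_t$ directly from its definition. The argument has three short steps.

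\textbf{Step 1 (eigenvalues are positive).} I would first record that $\bar{V}_t = V + \sum_{s=1}^t X_s X_s^\top$ is symmetric positive definite. This holds because $V$ is positive definite and each $X_s X_s^\top$ is positive semidefinite. Hence $\bar{V}_t$ has real eigenvalues $\lambda_1,\dots,\lambda_d > 0$, with
\[
\det(\bar{V}_t) = \prod_{i=1}^d \lambda_i
\quad\text{and}\quad
\mathrm{tr}(\bar{V}_t) = \sum_{i=1}^d \lambda_i .
\]

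\textbf{Step 2 (AM--GM).} Since the $\lambda_i$ are nonnegative, the arithmetic--geometric mean inequality gives
\[
\Bigl(\prod_{i=1}^d \lambda_i\Bigr)^{1/d} \le \frac{1}{d}\sum_{i=1}^d \lambda_i ,
\]
and therefore $\det(\bar{V}_t) \le \bigl(\mathrm{tr}(\bar{V}_t)/d\bigr)^d$.

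\textbf{Step 3 (bounding the trace).} By linearity of the trace and the identity $\mathrm{tr}(X_sX_s^\top) = \|X_s\|_2^2$,
\[
\mathrm{tr}(\bar{V}_t) = \mathrm{tr}(V) + \sum_{s=1}^t \|X_s\|_2^2 \le \mathrm{tr}(V) + tL^2 .
\]
The map $x \mapsto (x/d)^d$ is monotone increasing on $x \ge 0$, so substituting this bound into Step 2 yields the claimed inequality.

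None of these steps is a real obstacle. The only point needing care is the positivity of the eigenvalues in Step 1, which is what justifies using AM--GM; it follows immediately from the positive definiteness of $V$.
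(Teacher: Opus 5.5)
Your proof is correct and is the same argument the paper relies on. The paper gives no proof of its own and cites \citep[Lemma~10]{Abbasi2011}, whose proof is exactly AM--GM on the positive eigenvalues of $\bar{V}_t$ followed by $\mathrm{tr}(\bar{V}_t) = \mathrm{tr}(V) + \sum_{s=1}^t \|X_s\|_2^2 \le \mathrm{tr}(V) + tL^2$; your version only replaces $\lambda \mathbf{I}$ with a general (symmetric) positive definite $V$, which does not change the argument.
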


Additionally, we use the following information gain bound \citep[Lemma~11]{Abbasi2011}:
\begin{lemma}[Information Gain Bound {\citep[Lemma~11]{Abbasi2011}}]
\label{lem:info_gain_bound}
Suppose $\|X_s\|_2 \le L$ and $\|X_s\|_{\bar{V}_{s-1}^{-1}}^2 \le 1$ for all $s \in [T]$. Then:
\begin{equation}
    \sum_{s=1}^T \|X_s\|_{\bar{V}_{s-1}^{-1}}^2 \le 2\log\frac{\det(\bar{V}_T)}{\det(V)}.
\end{equation}
\end{lemma}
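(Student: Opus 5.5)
The plan is to prove the bound through the multiplicative structure of $\det(\bar{V}_t)$ under rank-one updates. By a telescoping identity, the right-hand side then becomes a sum of per-round logarithmic terms, which can be compared termwise with $\|X_s\|_{\bar{V}_{s-1}^{-1}}^2$. Throughout, $\bar{V}_s = \bar{V}_{s-1} + X_s X_s^\top$ with $\bar{V}_0 = V \succ 0$. Every $\bar{V}_{s-1}$ is therefore positive definite and invertible, and each $u_s \triangleq \|X_s\|_{\bar{V}_{s-1}^{-1}}^2$ is well defined and nonnegative.

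\emph{Step 1 (rank-one determinant update).} First I will show that
\begin{equation*}
\det(\bar{V}_s) = \det(\bar{V}_{s-1})\,\bigl(1 + u_s\bigr).
\end{equation*}
To do this, factor $\bar{V}_s = \bar{V}_{s-1}^{1/2}\bigl(\mathbf{I} + z_s z_s^\top\bigr)\bar{V}_{s-1}^{1/2}$ with $z_s = \bar{V}_{s-1}^{-1/2} X_s$. The matrix $\mathbf{I} + z_s z_s^\top$ has eigenvalue $1 + \|z_s\|_2^2$ in the direction $z_s$ and eigenvalue $1$ on the orthogonal complement. Its determinant is therefore $1 + \|z_s\|_2^2 = 1 + u_s$, which gives the identity.

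\emph{Step 2 (telescoping).} Taking logarithms and summing over $s = 1, \dots, T$ gives
\begin{equation*}
\log\frac{\det(\bar{V}_T)}{\det(V)} = \sum_{s=1}^T \log(1 + u_s).
\end{equation*}

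\emph{Step 3 (elementary inequality).} By hypothesis $u_s \in [0,1]$. On this interval the map $u \mapsto \log(1+u)$ is concave and vanishes at $0$, so it lies above its chord: $\log(1+u) \ge u \log 2$. Since $\log 2 \ge 1/2$, this gives $u \le 2\log(1+u)$ for $u \in [0,1]$. Applying this termwise and using Step 2 yields
\begin{equation*}
\sum_{s=1}^T u_s \le 2 \sum_{s=1}^T \log(1+u_s) = 2\log\frac{\det(\bar{V}_T)}{\det(V)},
\end{equation*}
which is the claim.

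The only step that requires any care is Step 3. It is exactly where the hypothesis $\|X_s\|_{\bar{V}_{s-1}^{-1}}^2 \le 1$ enters, since the ratio $u/\log(1+u)$ is unbounded for large $u$. The norm bound $\|X_s\|_2 \le L$ is not needed for this inequality itself. It becomes relevant only when the right-hand side is later controlled through Lemma~\ref{lem:det_trace}. In the regret analysis, the unit cap is typically secured either by choosing $\lambda$ large enough relative to $L^2$, as in the condition $\lambda > \kappa_\mu L^2$ of Theorem~\ref{thm:linear_regret}, or by working with $\min\{1, u_s\}$.
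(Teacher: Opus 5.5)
Your proof is correct and follows the same route as the paper. The paper cites this lemma from Abbasi-Yadkori et al.\ without proving it separately, but the proof of Lemma~\ref{lem:linear_info_gain} carries out exactly your argument: the rank-one determinant identity, telescoping of $\log\det$, and the termwise inequality $u \le 2\log(1+u)$ on $[0,1]$. Your remark that the hypothesis $\|X_s\|_2 \le L$ is not needed for this step is also accurate.
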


\newpage
\section{Proofs for the Linear Setting (LDB-DF)}
\label{app:linear_proofs}

In this section, we provide the complete proof of Theorem~\ref{thm:linear_regret}. The proof proceeds through three key lemmas: (1) a confidence ellipsoid bound showing that the true parameter $\theta$ lies within an ellipsoid centered at the estimate $\theta_t$ (Lemma~\ref{lem:linear_confidence}); (2) a pointwise estimation error bound (Lemma~\ref{lem:linear_pointwise}); and (3) an information gain bound that controls the sum of exploration bonuses (Lemma~\ref{lem:linear_info_gain}).

\paragraph{Proof Strategy.}
The main challenge in our setting, compared to standard linear dueling bandits, is that the MLE is computed from an \emph{incomplete} dataset: only feedback satisfying $D_s \le \min(M, t-s)$ is available at round $t$. Our approach uses an ``add-and-subtract'' decomposition to isolate: (i) a martingale noise term (bounded via Theorem~\ref{thm:abbasi_yadkori}), (ii) a delay-induced bias term (bounded deterministically by $ML/(\rho\kappa_\mu)$), and (iii) a regularization-induced term (bounded by $\sqrt{\lambda\kappa_\mu}/\kappa_\mu$).

\subsection{Confidence Ellipsoid for the Linear Estimator}

The following lemma establishes that, under delayed feedback, the true parameter $\theta$ remains within a confidence ellipsoid centered at the MLE estimate $\theta_t$, with a radius that explicitly accounts for feedback delays.

\begin{lemma}[Confidence Ellipsoid---Linear Setting]
\label{lem:linear_confidence}
Under Assumption~\ref{ass:standard}, define
\begin{equation}
\label{eq:linear_beta_def}
    \beta_t \triangleq \sqrt{2\log(1/\delta) + d\log\!\bigl(1+tL^2\kappa_\mu / (d\lambda)\bigr)} + ML + \sqrt{\lambda\kappa_\mu}.
\end{equation}
Then, with probability at least $1-\delta$, for all $t = 1, \ldots, T$:
\begin{equation}
\label{eq:linear_confidence_bound}
    \|\theta - \theta_t\|_{V_t} \leq \frac{\beta_t}{\rho\kappa_\mu}.
\end{equation}
\end{lemma}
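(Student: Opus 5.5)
We follow the standard generalized-linear confidence-set argument (as in Li et al. 2017 and linear dueling bandits), with an add-and-subtract step that isolates the delay effect.

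\textbf{Step 1: first-order condition.} Write $\widetilde{\phi}_s=\Delta\phi_s$. Since $\theta_t$ minimizes the strictly convex loss \eqref{eq:ldb_loss}, it satisfies
\[
\sum_{s}\bigl(\mu(\theta_t^\top\widetilde{\phi}_s)-\omega_{s,t}y_s\bigr)\widetilde{\phi}_s+\lambda\theta_t=0 .
\]
We define $G_t(\theta')=\sum_s \mu(\theta'^\top\widetilde{\phi}_s)\widetilde{\phi}_s+\lambda\theta'$. By the mean value theorem,
\[
G_t(\theta_t)-G_t(\theta)=F_t(\theta_t-\theta),\qquad F_t=\sum_s \alpha_s\widetilde{\phi}_s\widetilde{\phi}_s^\top+\lambda I,
\]
where each $\alpha_s\ge\kappa_\mu$ by A2. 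Hence $F_t\succeq \kappa_\mu V_t$, using $V_t=\frac{\lambda}{\kappa_\mu}I+\sum\widetilde{\phi}_s\widetilde{\phi}_s^\top$. This gives
\[
\|\theta_t-\theta\|_{V_t}\le \frac{1}{\kappa_\mu}\|G_t(\theta_t)-G_t(\theta)\|_{V_t^{-1}} .
\]

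\textbf{Step 2: decompose the gradient error.} Let $\mu_s=\mu(\theta^\top\widetilde{\phi}_s)$. Then
\[
G_t(\theta_t)-G_t(\theta)=\sum_s(\omega_{s,t}y_s-\mu_s)\widetilde{\phi}_s-\lambda\theta .
\]
We add and subtract the ``full-window'' weight $\bar\omega_s=\mathds{1}\{D_s\le M\}/\rho$, which writes the sum as $\sum_s\eta_s\widetilde{\phi}_s+\sum_s(\omega_{s,t}-\bar\omega_s)y_s\widetilde{\phi}_s$ with $\eta_s=\bar\omega_s y_s-\mu_s$. The three resulting terms are handled as follows.

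(i) \emph{Martingale term.} $\mathbb{E}[\bar\omega_s y_s\mid\mathcal{F}_{s-1}]=\mu_s$ by independence of $D_s$ and $y_s$, so $\eta_s$ is a martingale difference. It is bounded in $[-1,1/\rho]$, hence $\frac{1}{\rho}$-sub-Gaussian up to a constant. Theorem~\ref{thm:abbasi_yadkori} together with Lemma~\ref{lem:det_trace} (using $V=\frac{\lambda}{\kappa_\mu}I$ and $\|\widetilde{\phi}_s\|\le L$) then gives the bound
\[
\frac{1}{\rho}\sqrt{2\log(1/\delta)+d\log\!\bigl(1+tL^2\kappa_\mu/(d\lambda)\bigr)} .
\]

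(ii) \emph{Delay bias.} We have $\omega_{s,t}\ne\bar\omega_s$ only when $t-s<D_s\le M$, so at most $M$ indices $s$ contribute, each with $|\cdot|\le 1/\rho$. Since $V_t\succeq\frac{\lambda}{\kappa_\mu}I\succeq L^2I$ by $\lambda>\kappa_\mu L^2$, each term satisfies $\|\widetilde{\phi}_s\|_{V_t^{-1}}\le 1$, hence $\le L$. The whole term is therefore at most $ML/\rho$.

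(iii) \emph{Regularization.} $\|\lambda\theta\|_{V_t^{-1}}\le\lambda\|\theta\|_2\sqrt{\kappa_\mu/\lambda}=\sqrt{\lambda\kappa_\mu}\|\theta\|$. Taking $\|\theta\|\le1$, this is $\le\sqrt{\lambda\kappa_\mu}\le\sqrt{\lambda\kappa_\mu}/\rho$.

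Summing the three bounds and dividing by $\kappa_\mu$ yields $\beta_t/(\rho\kappa_\mu)$. A union over $t$ is free because Theorem~\ref{thm:abbasi_yadkori} holds uniformly in $t$ via the stopping-time form.

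\textbf{Main obstacle.} The step most likely to fail is (ii), and in particular making the index sets consistent. The loss sums to $t$ while $V_t$ includes round $t$, so the sets must be aligned (terms with $s$ near $t$ simply have $\omega_{s,t}=0$). The bias bound should be stated carefully as at most $M$ pending rounds, each with norm $\le L$, which requires the sharper control $\|\widetilde{\phi}_s\|_{V_t^{-1}}\le\|\widetilde{\phi}_s\|_2$. If this $\le L$ bound only holds in a weaker form, I would instead use $\|\widetilde{\phi}_s\|_{V_t^{-1}}\le L\sqrt{\kappa_\mu/\lambda}\le 1$, which still gives $M$ up to constants. The other delicate point is the sub-Gaussian scale $1/\rho$ in (i): it requires the filtration to include $D_s$ only at time $s$, and the estimator to be measurable, which holds since $\bar\omega_s$ is revealed by $s+M$ regardless of $t$.
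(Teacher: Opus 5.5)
Your proposal is correct and follows essentially the same route as the paper. The shared ingredients are the mean-value/strong-monotonicity step giving the $1/\kappa_\mu$ factor and the add-and-subtract of the full-window weight $\mathds{1}\{D_s\le M\}/\rho$ in the first-order condition. The split is also the same: a self-normalized martingale term (your $\eta_s$ is exactly the paper's $\epsilon_s/\rho$), a delay bias with at most $M$ terms, and a regularization term absorbed via $1\le 1/\rho$. One small point: in (ii), passing from $\|\widetilde{\phi}_s\|_{V_t^{-1}}\le 1$ to $\le L$ is valid only when $L\ge 1$, whereas the paper uses $\|\widetilde{\phi}_s\|_{V_t^{-1}}\le\sqrt{\kappa_\mu/\lambda}\,L$, which tacitly needs $\lambda\ge\kappa_\mu$, so your argument is no weaker than the paper's here.
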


\begin{proof}

Let $\widetilde{\phi}_s \triangleq \phi(x_{s,1}) - \phi(x_{s,2})$ denote the pairwise feature difference and $\tilde{f}_s \triangleq f(x_{s,1}) - f(x_{s,2}) = \theta^\top \widetilde{\phi}_s$ the true preference difference. For any $\theta' \in \mathbb{R}^d$, define the auxiliary function:
\begin{equation}
\label{eq:G_linear_def}
    G_t(\theta') \triangleq \sum_{s=1}^t \bigl(\mu(\theta'{}^\top \widetilde{\phi}_s) - \mu(\theta^\top \widetilde{\phi}_s)\bigr)\widetilde{\phi}_s + \lambda\theta'.
\end{equation}
We first establish that $G_t$ is strongly monotone. For any $\theta'_1, \theta'_2 \in \mathbb{R}^d$, let $\bar{\theta} = \lambda' \theta'_1 + (1-\lambda')\theta'_2$ for some $\lambda' \in (0,1)$. By the mean value theorem applied componentwise:
\begin{align}
    G_t(\theta'_1) - G_t(\theta'_2)
    &= \left[ \sum_{s=1}^t \dot{\mu}(\bar{\theta}^\top \widetilde{\phi}_s)\,\widetilde{\phi}_s\widetilde{\phi}_s^\top + \lambda \mathbf{I} \right] (\theta'_1 - \theta'_2). \label{eq:linear_mvt}
\end{align}
Since Assumption~\ref{ass:standard} (A2) ensures $\dot{\mu}(\bar{\theta}^\top \widetilde{\phi}_s) \ge \kappa_\mu > 0$ for all $s$, we obtain the lower bound:
\begin{equation}
\label{eq:linear_strong_convexity}
    G_t(\theta'_1) - G_t(\theta'_2)
    \succeq \kappa_\mu \left[ \sum_{s=1}^t \widetilde{\phi}_s \widetilde{\phi}_s^\top + \frac{\lambda}{\kappa_\mu}\mathbf{I} \right] (\theta'_1 - \theta'_2)
    = \kappa_\mu \, V_t \, (\theta'_1 - \theta'_2).
\end{equation}

By construction, $G_t(\theta) = \lambda\theta$ (since the sum telescopes to zero when $\theta' = \theta$). Using the strong monotonicity from \eqref{eq:linear_strong_convexity}:
\begin{align*}
    \|G_t(\theta_t) - \lambda\theta\|_{V_t^{-1}}^2
    &= \|G_t(\theta) - G_t(\theta_t)\|_{V_t^{-1}}^2 \\
    &\ge \bigl(\kappa_\mu V_t(\theta - \theta_t)\bigr)^\top V_t^{-1} \bigl(\kappa_\mu V_t(\theta - \theta_t)\bigr) \\
    &= \kappa_\mu^2 \|\theta - \theta_t\|_{V_t}^2.
\end{align*}
Therefore, by the triangle inequality:
\begin{equation}
\label{eq:linear_error_decomp}
    \|\theta - \theta_t\|_{V_t}
    \le \frac{1}{\kappa_\mu} \|G_t(\theta_t)\|_{V_t^{-1}} + \frac{1}{\kappa_\mu}\|\lambda\theta\|_{V_t^{-1}}.
\end{equation}

Recall that $\theta_t$ is the minimizer of the IPW-regularized loss in \eqref{eq:ldb_loss}. Setting the gradient to zero yields the first-order optimality (MLE) condition:
\begin{equation}
\label{eq:linear_mle_foc}
    \sum_{s=1}^t \left( \mu(\theta_t^\top\widetilde{\phi}_s) - \frac{y_s}{\rho}\mathds{1}\{D_s \le \min(M, t-s)\} \right)\widetilde{\phi}_s + \lambda\theta_t = 0.
\end{equation}
Denoting $f_{t,s} \triangleq \theta_t^\top \widetilde{\phi}_s$, we expand and rearrange:
\begin{align}
    &G_t(\theta_t) \notag\\
    &= \sum_{s=1}^{t} \bigl( \mu(f_{t,s}) - \mu(\tilde{f}_s) \bigr) \widetilde{\phi}_s + \lambda\theta_t \notag\\
    &= \sum_{s=1}^{t} \left( \mu(f_{t,s}) - \frac{1}{\rho} y_s \mathds{1}\{D_s \le M\} + \frac{1}{\rho}\epsilon_s \right) \widetilde{\phi}_s + \lambda\theta_t, \label{eq:linear_Gt_expand}
\end{align}
where we used the identity $\mu(\tilde{f}_s) = \frac{1}{\rho}\bigl(y_s \mathds{1}\{D_s \le M\} - \epsilon_s\bigr)$, with $\epsilon_s \triangleq y_s \mathds{1}\{D_s \le M\} - \mu(\tilde{f}_s)\rho$ being the observation noise.

Adding and subtracting $\frac{1}{\rho} y_s \mathds{1}\{D_s \le \min(M, t-s)\}$ and applying \eqref{eq:linear_mle_foc}:
\begin{align}
    G_t(\theta_t) &= \frac{1}{\rho}\sum_{s=1}^{t} \epsilon_s\,\widetilde{\phi}_s
    + \frac{1}{\rho}\sum_{s=t-M}^{t} y_s \bigl(\mathds{1}\{D_s \le \min(M, t-s)\} - \mathds{1}\{D_s \le M\}\bigr)\widetilde{\phi}_s. \label{eq:linear_three_terms}
\end{align}
where we denote the first summation as Term~(I) (the martingale noise) and the second summation as Term~(II) (the delay-induced bias). The cancellation of the MLE term works as follows: for any $s < t - M$, we have $t - s > M$, so $\min(M, t-s) = M$. Thus, $\mathds{1}\{D_s \le \min(M, t-s)\} = \mathds{1}\{D_s \le M\}$, and the difference in Term~(II) vanishes. Only rounds $s \in [t-M, t]$ contribute to the bias.

For Term~(I),
The noise sequence $\{\epsilon_s\}$ satisfies $\E[\epsilon_s \mid \mathcal{F}_{s-1}] = 0$ and $|\epsilon_s| \le 1$, making it conditionally $1$-sub-Gaussian. Moreover, $\widetilde{\phi}_s$ is $\mathcal{F}_{s-1}$-measurable. Applying Theorem~\ref{thm:abbasi_yadkori} with $R = 1$ and $V = \frac{\lambda}{\kappa_\mu}\mathbf{I}$:
\begin{equation}
\label{eq:linear_noise_bound}
    \left\| \sum_{s=1}^t \epsilon_s \widetilde{\phi}_s \right\|_{V_t^{-1}}^2
    \le 2\log\!\left(\frac{\det(V_t)^{1/2}}{\delta \cdot \det(\frac{\lambda}{\kappa_\mu}\mathbf{I})^{1/2}}\right),
\end{equation}
with probability at least $1-\delta$. By Lemma~\ref{lem:det_trace}, since $\|\widetilde{\phi}_s\|_2 \le L$:
\begin{equation}
\label{eq:linear_det_bound}
    \det(V_t) \le \left(\frac{\lambda/\kappa_\mu + tL^2/d}{1}\right)^d \cdot \left(\frac{1}{\lambda/\kappa_\mu}\right)^{-d},
\end{equation}
which gives:
\begin{equation}
\label{eq:linear_logdet_ratio}
    \log\frac{\det(V_t)^{1/2}}{\det(\frac{\lambda}{\kappa_\mu}\mathbf{I})^{1/2}}
    \le \frac{d}{2}\log\!\left(1 + \frac{tL^2\kappa_\mu}{d\lambda}\right).
\end{equation}
Combining with \eqref{eq:linear_noise_bound}:
\begin{equation}
\label{eq:linear_noise_final}
    \frac{1}{\rho\kappa_\mu}\left\| \sum_{s=1}^t \epsilon_s \widetilde{\phi}_s \right\|_{V_t^{-1}}
    \le \frac{1}{\rho\kappa_\mu}\sqrt{2\log(1/\delta) + d\log\!\bigl(1 + tL^2\kappa_\mu/(d\lambda)\bigr)}.
\end{equation}

For Term~(II),
Since $|y_s| \le 1$ and the difference of indicators satisfies $|\mathds{1}\{D_s \le \min(M,t-s)\} - \mathds{1}\{D_s \le M\}| \le 1$, at most $M$ terms contribute non-trivially. Using $V_t^{-1} \preceq \frac{\kappa_\mu}{\lambda}\mathbf{I}$ and $\|\widetilde{\phi}_s\|_2 \le L$:
\begin{equation}
\label{eq:linear_bias_bound}
    \frac{1}{\rho\kappa_\mu}\left\| \sum_{s=t-M}^{t} \frac{1}{\rho} y_s \bigl(\mathds{1}\{D_s \le \min(M, t-s)\} - \mathds{1}\{D_s \le M\}\bigr)\widetilde{\phi}_s\right\|_{V_t^{-1}}
    \le \frac{ML}{\rho\kappa_\mu}.
\end{equation}

For the regularization term,
Since $V_t \succeq \frac{\lambda}{\kappa_\mu}\mathbf{I}$ implies $V_t^{-1} \preceq \frac{\kappa_\mu}{\lambda}\mathbf{I}$, and using the standard assumption $\|\theta\|_2 \le 1$:
\begin{equation}
\label{eq:linear_reg_bound}
    \frac{1}{\kappa_\mu}\|\lambda\theta\|_{V_t^{-1}}
    = \frac{\lambda}{\kappa_\mu}\sqrt{\theta^\top V_t^{-1} \theta}
    \le \frac{\lambda}{\kappa_\mu}\sqrt{\frac{\kappa_\mu}{\lambda}}\|\theta\|_2
    \le \frac{\sqrt{\lambda\kappa_\mu}}{\kappa_\mu}.
\end{equation}

Combining,
Substituting \eqref{eq:linear_noise_final}, \eqref{eq:linear_bias_bound}, and \eqref{eq:linear_reg_bound} into \eqref{eq:linear_error_decomp}:
\begin{equation}
    \|\theta - \theta_t\|_{V_t}
    \le \frac{1}{\rho\kappa_\mu}\left(\sqrt{2\log(1/\delta) + d\log\!\bigl(1 + tL^2\kappa_\mu/(d\lambda)\bigr)} + ML + \sqrt{\lambda\kappa_\mu}\right)
    = \frac{\beta_t}{\rho\kappa_\mu},
\end{equation}
which completes the proof.
\end{proof}

\subsection{Pointwise Estimation Error}

The confidence ellipsoid immediately implies a pointwise bound on the estimation error for any pair of arms.

\begin{lemma}[Pointwise Estimation Error---Linear Setting]
\label{lem:linear_pointwise}
Under the conditions of Lemma~\ref{lem:linear_confidence}, for any $t = 1, \ldots, T$ and all $x, x' \in \mathcal{X}_t$, with probability at least $1-\delta$:
\begin{equation}
\label{eq:linear_pointwise_bound}
    \bigl| (f(x) - f(x')) - \theta_t^\top (\phi(x) - \phi(x')) \bigr|
    \le \frac{\beta_t}{\rho\kappa_\mu} \left\| \phi(x) - \phi(x') \right\|_{V_{t-1}^{-1}}.
\end{equation}
\end{lemma}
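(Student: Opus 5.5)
The plan is to reduce the statement to Lemma~\ref{lem:linear_confidence} with a single Cauchy--Schwarz step in the geometry of the information matrix, and then reconcile the index mismatch between $V_t$ and $V_{t-1}$ by a monotonicity argument. Throughout I work on the event of probability at least $1-\delta$ on which Lemma~\ref{lem:linear_confidence} holds simultaneously for all $t=1,\ldots,T$. Then no further union bound is needed, and the resulting statement holds for all $t$ and all pairs $x,x'\in\mathcal{X}_t$ at once.

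First, by the linear realization $f(x)=\theta^\top\phi(x)$, the quantity to bound is exactly $\bigl|(\theta-\theta_t)^\top(\phi(x)-\phi(x'))\bigr|$. Since $V_t\succeq \frac{\lambda}{\kappa_\mu}\mathbf{I}\succ 0$ is invertible, I write $(\theta-\theta_t)^\top z = \bigl(V_t^{1/2}(\theta-\theta_t)\bigr)^\top\bigl(V_t^{-1/2}z\bigr)$ with $z=\phi(x)-\phi(x')$. Cauchy--Schwarz then gives
\begin{equation*}
\bigl|(\theta-\theta_t)^\top z\bigr| \le \|\theta-\theta_t\|_{V_t}\,\|z\|_{V_t^{-1}} \le \frac{\beta_t}{\rho\kappa_\mu}\,\|z\|_{V_t^{-1}},
\end{equation*}
where the last step invokes \eqref{eq:linear_confidence_bound}.

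The only point requiring care, and the one I regard as the (mild) main obstacle, is that the statement is phrased with $V_{t-1}^{-1}$ rather than $V_t^{-1}$. The norm $V_{t-1}^{-1}$ is the one actually available at decision time and used in the arm-selection rule \eqref{eq:ldb_arm2}. I handle this through the Loewner order. By \eqref{eq:ldb_matrix}, $V_t = V_{t-1} + \widetilde{\phi}_t\widetilde{\phi}_t^\top \succeq V_{t-1}\succ 0$. Operator monotonicity of matrix inversion on positive definite matrices then gives $V_t^{-1}\preceq V_{t-1}^{-1}$, hence $\|z\|_{V_t^{-1}}\le \|z\|_{V_{t-1}^{-1}}$ for every $z$. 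Substituting this into the display above yields \eqref{eq:linear_pointwise_bound}.

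If one preferred to avoid the Loewner-order fact, an alternative is to apply the Cauchy--Schwarz step directly with $V_{t-1}$. That would require a confidence bound on $\|\theta-\theta_t\|_{V_{t-1}}$, which follows at once from the same monotonicity, since $\|\cdot\|_{V_{t-1}}\le\|\cdot\|_{V_t}$. Either route closes the argument, with the same radius $\beta_t/(\rho\kappa_\mu)$ and the same probability $1-\delta$.
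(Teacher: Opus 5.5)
Your proposal is correct and follows the paper's argument: write the error as $(\theta-\theta_t)^\top(\phi(x)-\phi(x'))$, apply Cauchy--Schwarz in the information-matrix geometry, and invoke Lemma~\ref{lem:linear_confidence}. The paper applies Cauchy--Schwarz directly with $V_{t-1}$ and silently uses $\|\theta-\theta_t\|_{V_{t-1}}\le\|\theta-\theta_t\|_{V_t}$ (your alternative route); your explicit Loewner-order step $V_{t-1}\preceq V_t$ just makes rigorous the $V_t$ versus $V_{t-1}$ index mismatch that the paper glosses over.
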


\begin{proof}
Since $f(x) = \theta^\top \phi(x)$ in the linear setting:
\begin{align}
    \bigl| (f(x) - f(x')) - \theta_t^\top (\phi(x) - \phi(x')) \bigr|
    &= \bigl| (\theta - \theta_t)^\top (\phi(x) - \phi(x')) \bigr| \notag\\
    &\le \|\theta - \theta_t\|_{V_{t-1}} \cdot \| \phi(x) - \phi(x') \|_{V_{t-1}^{-1}}, \label{eq:linear_cs_app}
\end{align}
where the inequality follows from Cauchy--Schwarz in the $V_{t-1}$-norm. Applying Lemma~\ref{lem:linear_confidence} yields the desired bound.
\end{proof}

\subsection{Information Gain Bound}

The following lemma bounds the cumulative sum of exploration bonuses, which is essential for converting per-round regret bounds into a cumulative guarantee.

\begin{lemma}[Information Gain---Linear Setting]
\label{lem:linear_info_gain}
Under Assumption~\ref{ass:standard}, choosing $\lambda \ge \kappa_\mu L^2$ ensures:
\begin{equation}
\label{eq:linear_info_gain_bound}
    \sum_{t=1}^T \|\widetilde{\phi}_t\|_{V_{t-1}^{-1}}^2
    \le 2d \log\!\left(1 + \frac{TL^2\kappa_\mu}{d\lambda}\right).
\end{equation}
\end{lemma}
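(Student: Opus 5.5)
The plan is to reduce the claim directly to Lemma~\ref{lem:info_gain_bound} followed by Lemma~\ref{lem:det_trace}, taking $X_s = \widetilde{\phi}_s$, $V = \frac{\lambda}{\kappa_\mu}\mathbf{I}$ and $\bar{V}_t = V_t$. This identification is legitimate because the recursion in Line~\ref{line:ldb_update_matrix} gives exactly $V_t = \frac{\lambda}{\kappa_\mu}\mathbf{I} + \sum_{s=1}^t \widetilde{\phi}_s\widetilde{\phi}_s^\top$. Delays play no role anywhere in this argument, since $V_t$ is built from \emph{all} played pairs regardless of whether their feedback has arrived.

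\textbf{Step 1: verify the hypotheses of Lemma~\ref{lem:info_gain_bound}.} The norm condition $\|\widetilde{\phi}_s\|_2 \le L$ is exactly Assumption~\ref{ass:standard} (A3). For the condition $\|\widetilde{\phi}_t\|_{V_{t-1}^{-1}}^2 \le 1$, I will use $V_{t-1} \succeq \frac{\lambda}{\kappa_\mu}\mathbf{I}$, which gives $V_{t-1}^{-1} \preceq \frac{\kappa_\mu}{\lambda}\mathbf{I}$. Hence
\[
\|\widetilde{\phi}_t\|_{V_{t-1}^{-1}}^2 \le \frac{\kappa_\mu}{\lambda}\|\widetilde{\phi}_t\|_2^2 \le \frac{\kappa_\mu L^2}{\lambda} \le 1,
\]
where the last inequality is precisely the assumption $\lambda \ge \kappa_\mu L^2$. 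This step is the only place where that assumption is used, and it is the one point needing care: the bound of Lemma~\ref{lem:info_gain_bound} relies on $\log(1+u)\ge u/2$ for $u\in[0,1]$, which is why the bonuses must be at most one. Lemma~\ref{lem:info_gain_bound} then yields
\[
\sum_{t=1}^T \|\widetilde{\phi}_t\|_{V_{t-1}^{-1}}^2 \le 2\log\frac{\det V_T}{\det(\frac{\lambda}{\kappa_\mu}\mathbf{I})}.
\]

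\textbf{Step 2: bound the log-determinant ratio.} Lemma~\ref{lem:det_trace} with $\mathrm{tr}(V) = d\lambda/\kappa_\mu$ gives
\[
\det V_T \le \left(\frac{\lambda}{\kappa_\mu} + \frac{TL^2}{d}\right)^d.
\]
We also have $\det(\frac{\lambda}{\kappa_\mu}\mathbf{I}) = (\lambda/\kappa_\mu)^d$. Dividing, the ratio is at most $\left(1 + \frac{TL^2\kappa_\mu}{d\lambda}\right)^d$, so its logarithm is at most $d\log\bigl(1 + TL^2\kappa_\mu/(d\lambda)\bigr)$; this is the same computation as in \eqref{eq:linear_logdet_ratio}, without the factor $\tfrac12$. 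Combining with Step 1 gives \eqref{eq:linear_info_gain_bound}.
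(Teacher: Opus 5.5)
Your proof is correct and takes the same route as the paper: both establish $\|\widetilde{\phi}_t\|_{V_{t-1}^{-1}}^2 \le \kappa_\mu L^2/\lambda \le 1$ from $V_{t-1} \succeq \frac{\lambda}{\kappa_\mu}\mathbf{I}$ and finish with Lemma~\ref{lem:det_trace}. The only difference is that the paper writes out the content of Lemma~\ref{lem:info_gain_bound} inline (the inequality $x \le 2\log(1+x)$ on $[0,1]$ plus the matrix determinant lemma), whereas you cite it directly.
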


\begin{proof}
Since $V_{t-1} \succeq \frac{\lambda}{\kappa_\mu}\mathbf{I}$, we have $V_{t-1}^{-1} \preceq \frac{\kappa_\mu}{\lambda}\mathbf{I}$. Therefore:
\begin{equation}
    \|\widetilde{\phi}_t\|_{V_{t-1}^{-1}}^2
    \le \frac{\kappa_\mu}{\lambda}\|\widetilde{\phi}_t\|_2^2
    \le \frac{\kappa_\mu L^2}{\lambda}
    \le 1,
\end{equation}
where the last step uses the assumption $\lambda \ge \kappa_\mu L^2$. Since each summand is at most $1$, the inequality $x \le 2\log(1+x)$ is applicable for $x \in [0,1]$. Thus:
\begin{align}
    \sum_{t=1}^T \|\widetilde{\phi}_t\|_{V_{t-1}^{-1}}^2
    &\le \sum_{t=1}^T 2\log\bigl(1 + \|\widetilde{\phi}_t\|_{V_{t-1}^{-1}}^2\bigr) \notag\\
    &= 2\bigl(\log\det V_T - \log\det V_0\bigr) \notag\\
    &= 2\log\frac{\det V_T}{\det V_0} \notag\\
    &\le 2d\log\!\left(1 + \frac{TL^2\kappa_\mu}{d\lambda}\right), \label{eq:linear_sum_bound}
\end{align}
where the second equality uses the matrix determinant lemma (rank-one updates), and the last inequality follows from Lemma~\ref{lem:det_trace} with the bound $\det(V_T) \le (\lambda/\kappa_\mu + TL^2/d)^d$.
\end{proof}

\subsection{Proof of the Main Regret Bound (Theorem~\ref{thm:linear_regret})}
\label{subsec:linear_main_proof}

We now assemble the preceding lemmas to prove the main regret bound.

\begin{theorem}[Restatement of Theorem~\ref{thm:linear_regret}]
\label{thm:linear_regret_app}
Let $\beta_t$ be defined as in \eqref{eq:linear_beta_def} and suppose $\lambda > \kappa_\mu L^2$. With probability at least $1-\delta$:
\begin{equation}
    R_T \le \frac{3}{2} \frac{\beta_T}{\rho\kappa_\mu} \sqrt{2dT\log\!\bigl(1+TL^2\kappa_\mu/(d\lambda)\bigr)}
    = \tilde{O}\!\left(\frac{M}{\rho\kappa_\mu}\,d\sqrt{T}\right).
\end{equation}
\end{theorem}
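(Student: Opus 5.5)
We bound the per-round regret $r_t = 2f(x^*_t) - f(x_{t,1}) - f(x_{t,2})$ using the arm-selection rules \eqref{eq:ldb_arm1}--\eqref{eq:ldb_arm2} together with Lemma~\ref{lem:linear_pointwise}. We then sum the resulting bonuses via Cauchy--Schwarz and Lemma~\ref{lem:linear_info_gain}. Throughout, we work on the event of Lemma~\ref{lem:linear_confidence}, which holds with probability at least $1-\delta$. We write $\gamma_t \triangleq \beta_t/(\rho\kappa_\mu)$ and use the monotonicity $\beta_t \le \beta_T$. The confidence bound is for $\|\theta-\theta_t\|_{V_t}$ while the bonus uses $V_{t-1}$; since $V_{t-1}$ is built from all played pairs before $t$, we read Lemma~\ref{lem:linear_pointwise} as giving the bound in the $V_{t-1}^{-1}$ norm, as it is stated.

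\textbf{Per-round bound.} We split $r_t = [f(x^*_t)-f(x_{t,1})] + [f(x^*_t)-f(x_{t,2})]$ and write $\|\cdot\|$ for $\|\cdot\|_{V_{t-1}^{-1}}$.

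For the first term, Lemma~\ref{lem:linear_pointwise} and the greedy choice $\theta_t^\top(\phi(x^*_t)-\phi(x_{t,1}))\le 0$ give
\[
f(x^*_t)-f(x_{t,1}) \le \theta_t^\top(\phi(x^*_t)-\phi(x_{t,1})) + \gamma_t\|\phi(x^*_t)-\phi(x_{t,1})\|.
\]
The right-hand side is at most the UCB objective of \eqref{eq:ldb_arm2} evaluated at $x^*_t$, hence at most its value at $x_{t,2}$, namely $\theta_t^\top(\phi(x_{t,2})-\phi(x_{t,1})) + \gamma_t\|\widetilde{\phi}_t\|$. This uses the bonus coefficient $\beta_t/\kappa_\mu$ in the algorithm; we absorb the factor $\rho\le 1$ by assuming the radius used matches $\gamma_t$, or else accept a constant-factor loss. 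Since $\theta_t^\top(\phi(x_{t,2})-\phi(x_{t,1}))\le 0$ by greediness, this is at most $\gamma_t\|\widetilde{\phi}_t\|$.

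For the second term, write $f(x^*_t)-f(x_{t,2}) = [f(x^*_t)-f(x_{t,1})] + [f(x_{t,1})-f(x_{t,2})]$. Applying Lemma~\ref{lem:linear_pointwise} to the pair $(x_{t,1},x_{t,2})$, and again using $\theta_t^\top(\phi(x_{t,1})-\phi(x_{t,2}))\ge 0$ together with the UCB optimality, gives a bound of order $\gamma_t\|\widetilde{\phi}_t\|$ for this term as well. With careful bookkeeping we aim for $r_t \le 3\gamma_t\|\widetilde{\phi}_t\|$. Combined with $r_t\le 4$ (and $\gamma_t\|\widetilde\phi_t\|$ dominating the small-bonus case because $\lambda>\kappa_\mu L^2$ forces $\|\widetilde{\phi}_t\|\le 1$), this yields the stated constant $\tfrac32$ after the final step.

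\textbf{Summation.} We have $R_T \le c\,\gamma_T\sum_t\|\widetilde{\phi}_t\|_{V_{t-1}^{-1}} \le c\,\gamma_T\sqrt{T\sum_t\|\widetilde{\phi}_t\|^2_{V_{t-1}^{-1}}}$. Lemma~\ref{lem:linear_info_gain}, which applies since $\lambda>\kappa_\mu L^2$, bounds the inner sum by $2d\log(1+TL^2\kappa_\mu/(d\lambda))$. Substituting $\beta_T = \tilde O(\sqrt d + ML + \sqrt{\lambda\kappa_\mu})$ gives $\tilde O\bigl(\tfrac{1}{\rho\kappa_\mu}(\sqrt d+M)\sqrt{dT}\bigr)$, which is within the claimed $\tilde O(\tfrac{M}{\rho\kappa_\mu}d\sqrt T)$.

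\textbf{Main obstacle.} The delicate step is the per-round chain linking the regret of $x_{t,2}$ to the bonus of the \emph{played} difference $\widetilde{\phi}_t=\phi(x_{t,1})-\phi(x_{t,2})$ rather than to the bonus of $\phi(x^*_t)-\phi(x_{t,1})$. It relies on the UCB maximality of $x_{t,2}$ transferring the bonus at $x^*_t$ onto the played pair. Consistency of the radius $\beta_t/\kappa_\mu$ in the algorithm with $\gamma_t$ in the lemmas also needs care; I would first try absorbing the mismatch into the constant.
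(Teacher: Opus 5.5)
Your route is the paper's: optimism from Lemma~\ref{lem:linear_pointwise}, UCB-maximality of $x_{t,2}$ tested against $x_t^*$, greediness of $x_{t,1}$, then Cauchy--Schwarz and Lemma~\ref{lem:linear_info_gain}. The summation step and the $\tilde O$ conclusion are fine, and your $V_t$ versus $V_{t-1}$ worry is moot since $V_{t-1}\preceq V_t$. Two of your steps, however, fail as written.

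\textbf{The constant $\tfrac32$.} Nothing about $r_t\le 4$ or $\|\widetilde\phi_t\|_{V_{t-1}^{-1}}\le 1$ turns a per-round bound $r_t\le 3\gamma_t\|\widetilde\phi_t\|_{V_{t-1}^{-1}}$ into $R_T\le\tfrac32\gamma_T\sum_t\|\widetilde\phi_t\|_{V_{t-1}^{-1}}$. With $r_t=2f(x_t^*)-f(x_{t,1})-f(x_{t,2})$ as defined in Section~\ref{sec:problem_setting}, the chain gives $R_T\le 3\gamma_T\sqrt{2dT\log(1+TL^2\kappa_\mu/(d\lambda))}$. The paper reaches $\tfrac32$ only because its proof writes the sum $[f(x_t^*)-f(x_{t,1})]+[f(x_t^*)-f(x_{t,2})]$ as $2r_t$, which silently switches to the average-regret normalization. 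You should either adopt that normalization explicitly or state the constant $3$; the $\tilde O$ claim is unaffected.

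Also, your second-term sketch as written gives $4$, not $3$. Do not drop $\theta_t^\top(\phi(x_{t,2})-\phi(x_{t,1}))$ after invoking maximality. Its two copies absorb the $+\theta_t^\top\widetilde\phi_t$ from the pair $(x_{t,1},x_{t,2})$, leaving $\theta_t^\top(\phi(x_{t,2})-\phi(x_{t,1}))+3\gamma_t\|\widetilde\phi_t\|_{V_{t-1}^{-1}}$. Greediness is then applied once at the end. Bounding $\theta_t^\top\widetilde\phi_t\le\gamma_t\|\widetilde\phi_t\|_{V_{t-1}^{-1}}$ separately costs an extra bonus.

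\textbf{The radius mismatch.} This is not a constant-factor issue. Your first option, running the second-arm rule with coefficient $\gamma_t=\beta_t/(\rho\kappa_\mu)$, is necessary. It is also what the paper's proof implicitly does: it invokes the selection rule with that coefficient, although Algorithm~\ref{alg:ldb_df} uses $\beta_t/\kappa_\mu$. With the smaller coefficient $c=\beta_t/\kappa_\mu<\gamma_t$, maximality only controls $\theta_t^\top(\phi(x_t^*)-\phi(x_{t,1}))+c\|\phi(x_t^*)-\phi(x_{t,1})\|_{V_{t-1}^{-1}}$. The leftover $(\gamma_t-c)\|\phi(x_t^*)-\phi(x_{t,1})\|_{V_{t-1}^{-1}}$ is not controlled by the played bonus. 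For example, if every other arm has negative under-scaled UCB, then $x_{t,2}=x_{t,1}$ and $\widetilde\phi_t=0$. Yet $f(x_t^*)-f(x_{t,1})>0$ remains possible for a $\theta$ inside the confidence set, namely whenever $\theta_t^\top(\phi(x_t^*)-\phi(x_{t,1}))$ lies strictly between $-\gamma_t\|\phi(x_t^*)-\phi(x_{t,1})\|_{V_{t-1}^{-1}}$ and $-c\|\phi(x_t^*)-\phi(x_{t,1})\|_{V_{t-1}^{-1}}$. Optimism is lost outright, so drop the fallback and assume the bonus coefficient is at least $\gamma_t$.
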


\begin{proof}

Recall that $2r_t = f(x_t^*) - f(x_{t,1}) + f(x_t^*) - f(x_{t,2})$. We bound each term using Lemma~\ref{lem:linear_pointwise}:
\begin{align}
    2r_t &= \bigl[f(x_t^*) - f(x_{t,1})\bigr] + \bigl[f(x_t^*) - f(x_{t,2})\bigr] \notag\\
    &\le \left[\theta_t^\top(\phi(x_t^*) - \phi(x_{t,1})) + \frac{\beta_t}{\rho\kappa_\mu}\|\phi(x_t^*) - \phi(x_{t,1})\|_{V_{t-1}^{-1}}\right] \notag\\
    &\quad + \left[\theta_t^\top(\phi(x_t^*) - \phi(x_{t,2})) + \frac{\beta_t}{\rho\kappa_\mu}\|\phi(x_t^*) - \phi(x_{t,2})\|_{V_{t-1}^{-1}}\right], \label{eq:linear_rt_step_a}
\end{align}
where we applied Lemma~\ref{lem:linear_pointwise} to both terms (step~(a)).

For the second term, we decompose $\phi(x_t^*) - \phi(x_{t,2}) = (\phi(x_t^*) - \phi(x_{t,1})) + (\phi(x_{t,1}) - \phi(x_{t,2}))$ and apply the triangle inequality to the norm (step~(b)):
\begin{align}
    2r_t &\le 2\theta_t^\top(\phi(x_t^*) - \phi(x_{t,1})) + 2\frac{\beta_t}{\rho\kappa_\mu}\|\phi(x_t^*) - \phi(x_{t,1})\|_{V_{t-1}^{-1}} \notag\\
    &\quad + \theta_t^\top(\phi(x_{t,1}) - \phi(x_{t,2})) + \frac{\beta_t}{\rho\kappa_\mu}\|\phi(x_{t,1}) - \phi(x_{t,2})\|_{V_{t-1}^{-1}}. \label{eq:linear_rt_step_b}
\end{align}

By the selection rule for the \emph{second arm} (Line~\ref{line:ldb_select_arm2} of Algorithm~\ref{alg:ldb_df}), $x_{t,2}$ maximizes the UCB over $\mathcal{X}_t$. In particular, since $x_t^* \in \mathcal{X}_t$:
\begin{equation}
    \theta_t^\top(\phi(x_{t,2}) - \phi(x_{t,1})) + \frac{\beta_t}{\rho\kappa_\mu}\|\phi(x_{t,2}) - \phi(x_{t,1})\|_{V_{t-1}^{-1}}
    \ge \theta_t^\top(\phi(x_t^*) - \phi(x_{t,1})) + \frac{\beta_t}{\rho\kappa_\mu}\|\phi(x_t^*) - \phi(x_{t,1})\|_{V_{t-1}^{-1}}.
\end{equation}
This implies that the first two lines in \eqref{eq:linear_rt_step_b} are bounded by the UCB value of $x_{t,2}$ (step~(c)):
\begin{align}
    2r_t &\le 2\theta_t^\top(\phi(x_{t,2}) - \phi(x_{t,1})) + 2\frac{\beta_t}{\rho\kappa_\mu}\|\phi(x_{t,2}) - \phi(x_{t,1})\|_{V_{t-1}^{-1}} \notag\\
    &\quad + \theta_t^\top(\phi(x_{t,1}) - \phi(x_{t,2})) + \frac{\beta_t}{\rho\kappa_\mu}\|\phi(x_{t,1}) - \phi(x_{t,2})\|_{V_{t-1}^{-1}} \notag\\
    &= \theta_t^\top(\phi(x_{t,2}) - \phi(x_{t,1})) + 3\frac{\beta_t}{\rho\kappa_\mu}\|\phi(x_{t,1}) - \phi(x_{t,2})\|_{V_{t-1}^{-1}}. \label{eq:linear_rt_step_c}
\end{align}

By the selection rule for the \emph{first arm} (Line~\ref{line:ldb_select_arm1}), $x_{t,1} = \argmax_{x} \theta_t^\top \phi(x)$, so $\theta_t^\top \phi(x_{t,1}) \ge \theta_t^\top \phi(x_{t,2})$, which means $\theta_t^\top(\phi(x_{t,2}) - \phi(x_{t,1})) \le 0$ (step~(d)). Therefore:
\begin{equation}
\label{eq:linear_rt_final}
    2r_t \le 3\frac{\beta_t}{\rho\kappa_\mu}\|\phi(x_{t,1}) - \phi(x_{t,2})\|_{V_{t-1}^{-1}}.
\end{equation}

Using $\beta_t \le \beta_T$ (since $\beta_t$ is non-decreasing) and applying the Cauchy--Schwarz inequality:
\begin{align}
    R_T = \sum_{t=1}^T r_t
    &\le \frac{3}{2}\frac{\beta_T}{\rho\kappa_\mu} \sum_{t=1}^T \|\widetilde{\phi}_t\|_{V_{t-1}^{-1}} \notag\\
    &\le \frac{3}{2}\frac{\beta_T}{\rho\kappa_\mu} \sqrt{T \sum_{t=1}^T \|\widetilde{\phi}_t\|_{V_{t-1}^{-1}}^2} \notag\\
    &\le \frac{3}{2}\frac{\beta_T}{\rho\kappa_\mu} \sqrt{2dT\log\!\bigl(1+TL^2\kappa_\mu/(d\lambda)\bigr)}, \label{eq:linear_RT_final}
\end{align}
where the last step applies Lemma~\ref{lem:linear_info_gain}.

Substituting $\beta_T = \sqrt{2\log(1/\delta) + d\log(1 + TL^2\kappa_\mu/(d\lambda))} + ML + \sqrt{\lambda\kappa_\mu}$ and absorbing logarithmic factors into the $\tilde{O}$ notation:
\begin{equation}
    R_T = \tilde{O}\!\left(\frac{1}{\rho\kappa_\mu}(\sqrt{d} + M)\,d\sqrt{T}\right) = \tilde{O}\!\left(\frac{M}{\rho\kappa_\mu}\,d\sqrt{T}\right),
\end{equation}
where the final simplification holds because $M \ge 1$ (a delay threshold of at least one round is necessary). This completes the proof.
\end{proof}

\newpage
\section{Proofs for the Neural Setting (NDB-DF)}
\label{app:neural_proofs}

In this section, we provide the complete proof of Theorem~\ref{thm:neural_regret}. The analysis proceeds within the Neural Tangent Kernel (NTK) regime, where sufficiently wide networks behave approximately as linear functions in the gradient feature space. We leverage this linearization to extend the confidence ellipsoid approach from the linear setting, while carefully controlling the approximation errors that arise from the non-linearity of the neural network.

\subsection{Preliminaries: Neural Tangent Kernel Framework}
\label{subsec:ntk_prelims}

We begin by recalling the key definitions and properties of the NTK framework that underpin our analysis.

\paragraph{NTK Matrix Construction.}
Let $\{x_n\}_{n=1}^{TK}$ denote the set of all context-arm feature vectors encountered across $T$ rounds with $K$ arms per round: $\{x_{t,a}\}_{1 \le t \le T, 1 \le a \le K}$, indexed as $n = K(t-1) + a$. The Neural Tangent Kernel matrix $\mathbf{H} \in \mathbb{R}^{TK \times TK}$ is constructed recursively as follows. Define:
\begin{align}
    \Sigma_{p,q}^{(1)} &= \langle x_p, x_q \rangle, \label{eq:ntk_sigma1}\\
    \mathbf{A}_{p,q}^{(l)} &= \begin{pmatrix} \Sigma_{p,p}^{(l)} & \Sigma_{p,q}^{(l)} \\ \Sigma_{q,p}^{(l)} & \Sigma_{q,q}^{(l)} \end{pmatrix}, \label{eq:ntk_A}\\
    \Sigma_{p,q}^{(l+1)} &= 2\,\E_{(u,v) \sim \mathcal{N}(0, \mathbf{A}_{p,q}^{(l)})} \bigl[\max\{u,0\}\max\{v,0\}\bigr], \label{eq:ntk_sigma_rec}\\
    \tilde{\mathbf{H}}_{p,q}^{(1)} &= \Sigma_{p,q}^{(1)}, \label{eq:ntk_H1}\\
    \tilde{\mathbf{H}}_{p,q}^{(l+1)} &= 2\,\tilde{\mathbf{H}}_{p,q}^{(l)} \cdot \E_{(u,v) \sim \mathcal{N}(0, \mathbf{A}_{p,q}^{(l)})} \bigl[\mathds{1}\{u \ge 0\}\mathds{1}\{v \ge 0\}\bigr] + \Sigma_{p,q}^{(l+1)}. \label{eq:ntk_H_rec}
\end{align}
The NTK matrix is then $\mathbf{H} = \bigl(\tilde{\mathbf{H}}_{p,q}^{(L)}\bigr) + \Sigma^{(L)}$, where $L$ is the network depth. Under Assumption~\ref{ass:neural_assumptions}~(B2), this matrix satisfies $\mathbf{H} \succeq \lambda_0 \mathbf{I}$ for some $\lambda_0 > 0$, ensuring the NTK is non-degenerate.

\paragraph{Width Requirements.}
Our analysis requires the network width $m$ to be sufficiently large, specifically:
\begin{equation}
\label{eq:width_conditions}
    m \ge \mathrm{poly}\bigl(T, L, K, 1/\kappa_\mu, L_\mu, 1/\lambda_0, 1/\lambda, \log(1/\delta)\bigr).
\end{equation}
More precisely, $m$ must satisfy the following conditions for some absolute constant $C > 0$:
\begin{align}
    m &\ge C\, T^4 K^4 L^6 \log(T^2 K^2 L/\delta)\,/\,\lambda_0^4, \label{eq:width_cond1}\\
    m(\log m)^{-3} &\ge C\, \kappa_\mu^{-3} \tau^8 L^{21} \lambda^{-5}, \label{eq:width_cond2}\\
    m(\log m)^{-3} &\ge C\, \kappa_\mu^{-3} \tau^{14} L^{21} \lambda^{-11} L_\mu^6, \label{eq:width_cond3}\\
    m(\log m)^{-3} &\ge C\, T^{14} L^{18} \lambda^{-8}, \label{eq:width_cond4}
\end{align}
where $\tau = 2\sqrt{T/(m\lambda)}$ is an upper bound on the parameter drift (see Lemma~\ref{lem:neural_param_bound} below). These polynomial conditions ensure that: (i) the NTK approximation is valid; (ii) gradient perturbation bounds hold; and (iii) linearization errors are negligible relative to the confidence radius.

\paragraph{Effective Dimension.}
Define the pairwise NTK Gram matrix $\mathbf{H}' \triangleq \sum_{t=1}^T \sum_{(i,j) \in \binom{K}{2}} \frac{1}{m} z_{i,j}^t (z_{i,j}^t)^\top$, where $z_{i,j}^t \triangleq g(x_{t,i}; \theta_0) - g(x_{t,j}; \theta_0)$. The \emph{effective dimension} is:
\begin{equation}
\label{eq:eff_dim_def}
    \tilde{d} \triangleq \log\det\!\left(\frac{\kappa_\mu}{\lambda}\mathbf{H}' + \mathbf{I}\right).
\end{equation}
This quantity plays the role of the ambient dimension $d$ in the linear setting and captures the intrinsic complexity of the problem in the NTK feature space.

\subsection{Key Lemmas for the Neural Setting}

We now state and prove the supporting lemmas needed for the main theorem.

\begin{lemma}[Linear Approximation of $f$ {\citep[Lemma~B.3]{Zhang2021}}]
\label{lem:neural_f_linear}
Suppose the width $m$ satisfies \eqref{eq:width_cond1}. Then, with probability at least $1-\delta$, there exists a parameter $\theta_f \in \mathbb{R}^p$ such that for all $x \in \mathcal{X}_t$, $t \in [T]$:
\begin{equation}
\label{eq:f_linear_approx}
    f(x) = \langle g(x; \theta_0),\, \theta_f - \theta_0 \rangle, \quad \text{with} \quad \sqrt{m}\|\theta_f - \theta_0\|_2 \le \sqrt{2\,\mathbf{h}^\top \mathbf{H}^{-1}\mathbf{h}} \triangleq B,
\end{equation}
where $\mathbf{h} = (f(x_1), \ldots, f(x_{TK}))^\top$ is the vector of true rewards and $p$ denotes the total number of network parameters.
\end{lemma}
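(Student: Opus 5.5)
This is the standard NTK representation result (Lemma B.3 of \citet{Zhang2021}, going back to \citet{Zhou2020}). I would reproduce its argument: first show that the random gradient Gram matrix at initialization is close to the NTK, then solve a linear interpolation problem in gradient-feature space.

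\textbf{Step 1: the empirical Gram matrix concentrates.} Let $\mathbf{G} = [g(x_1;\theta_0), \ldots, g(x_{TK};\theta_0)]/\sqrt{m} \in \mathbb{R}^{p\times TK}$. Assumption~\ref{ass:neural_assumptions} (B3), together with the symmetric initialization, gives $h(x;\theta_0)=0$. The NTK concentration result of Arora et al.\ (as used in \citet{Zhou2020}, Lemma B.1) bounds each entry of $\mathbf{G}^\top\mathbf{G} - \mathbf{H}$ by $O\bigl(L^{3/2}\sqrt{\log(T^2K^2L/\delta)}\,m^{-1/4}\bigr)$. I would then:
\begin{itemize}
\item apply a union bound over all $(TK)^2$ entries, which gives, with probability at least $1-\delta$,
\[
\|\mathbf{G}^\top\mathbf{G}-\mathbf{H}\|_F \le TK\cdot O\bigl(L^{3/2}\sqrt{\log(T^2K^2L/\delta)}\,m^{-1/4}\bigr);
\]
\item use the width condition \eqref{eq:width_cond1} to make this at most $\lambda_0/2$;
\item combine with (B2) to conclude $\mathbf{G}^\top\mathbf{G} \succeq \mathbf{H}/2 \succeq (\lambda_0/2)\mathbf{I}$.
\end{itemize}

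\textbf{Step 2: construct $\theta_f$.} Since $\mathbf{G}$ has full column rank, set
\[
\theta_f - \theta_0 = \frac{1}{\sqrt{m}}\,\mathbf{G}(\mathbf{G}^\top\mathbf{G})^{-1}\mathbf{h}.
\]
Then for every index $n$,
\[
\langle g(x_n;\theta_0), \theta_f-\theta_0\rangle = \sqrt{m}\,\bigl[\mathbf{G}^\top(\theta_f-\theta_0)\bigr]_n = h_n = f(x_n),
\]
so the identity holds exactly on all $TK$ encountered arms.

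\textbf{Step 3: bound the norm.} Compute
\[
m\|\theta_f-\theta_0\|_2^2 = \mathbf{h}^\top(\mathbf{G}^\top\mathbf{G})^{-1}\mathbf{h} \le 2\,\mathbf{h}^\top\mathbf{H}^{-1}\mathbf{h},
\]
using $\mathbf{G}^\top\mathbf{G}\succeq \mathbf{H}/2$, i.e.\ $(\mathbf{G}^\top\mathbf{G})^{-1}\preceq 2\mathbf{H}^{-1}$. Taking square roots gives $B$.

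\textbf{Main obstacle.} The delicate step is Step 1: converting the entrywise concentration into a spectral bound tight enough to preserve the $\lambda_0$ gap. This is exactly why \eqref{eq:width_cond1} scales as $T^4K^4/\lambda_0^4$. I would cite the concentration lemma rather than re-derive it. I would also be careful to note that the identity is claimed only on the finite set of contexts $\{x_n\}$, not on all of $\mathcal{X}$.
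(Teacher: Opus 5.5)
Your proposal is correct. It is the standard argument behind the NTK representation lemma of \citet{Zhou2020}, restated as Lemma~B.3 of \citet{Zhang2021}, which the paper only cites and never proves: concentrate $\mathbf{G}^\top\mathbf{G}$ around $\mathbf{H}$ to get $\mathbf{G}^\top\mathbf{G}\succeq \mathbf{H}/2$, then take the minimum-norm interpolant (your $\mathbf{G}(\mathbf{G}^\top\mathbf{G})^{-1}\mathbf{h}/\sqrt{m}$ is exactly the SVD/pseudo-inverse construction used there). One small slip: the entrywise deviation from Arora et al.\ scales as $L^{3/2}\bigl(\log(T^2K^2L/\delta)/m\bigr)^{1/4}$, not with $\sqrt{\log(\cdot)}$, and it is this quarter power that brings $TK$ times the entrywise bound below $\lambda_0/2$ under the width condition \eqref{eq:width_cond1} as stated (your rate would need $\log^2$ there).
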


This lemma ensures that, in the NTK regime, the true reward function $f$ admits an exact linear representation in the gradient feature space $g(x; \theta_0) = \nabla_\theta h(x; \theta_0)$ at initialization.

\begin{lemma}[Parameter Norm Bound]
\label{lem:neural_param_bound}
For all $t \in [T]$, the estimated parameters satisfy:
\begin{equation}
\label{eq:param_norm_bound}
    \|\theta_t - \theta_0\|_2 \le \sqrt{\frac{2t}{m\lambda}} \triangleq \tau.
\end{equation}
\end{lemma}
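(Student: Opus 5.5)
The plan is the standard ``compare to initialization'' argument from neural bandits: the loss value at the minimizer cannot exceed its value at $\theta_0$.

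\textbf{Step 1: evaluate the loss at $\theta_0$.} By Assumption~\ref{ass:neural_assumptions}~(B3) and the symmetric initialization, $h(x;\theta_0)=0$ for every arm. Hence $\Delta h_s(\theta_0)=0$ for every $s$, so $\log\mu(0)=\log\mu(-0)=-\log 2$. The loss \eqref{eq:ndb_loss} at $\theta_0$ is then
\[
\mathcal{L}_t(\theta_0)=\sum_{s=1}^{t-1}\bigl[\omega_{s,t}y_s+(1-\omega_{s,t}y_s)\bigr]\log 2=(t-1)\log 2 .
\]
The cross terms cancel, so the IPW weight does not appear here, which is convenient.

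\textbf{Step 2: lower-bound the loss at $\theta_t$.} Since $\theta_t$ minimizes $\mathcal{L}_t$, we have $\mathcal{L}_t(\theta_t)\le\mathcal{L}_t(\theta_0)$. We want to drop the data-fit part, leaving $\frac{\lambda}{2}\|\theta_t-\theta_0\|_2^2\le (t-1)\log 2 + C_t$. Here $-C_t$ is a lower bound on the data-fit term at $\theta_t$.

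This is the main obstacle. With IPW, the coefficient $1-\omega_{s,t}y_s$ can equal $1-1/\rho<0$, so the summand $-(1-\omega y)\log\mu(-\Delta h)$ is no longer nonnegative. In fact the weighted cross-entropy is unbounded below in $\Delta h_s$ when $\omega_{s,t}y_s>1$.

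I would handle this in one of two ways:
\begin{itemize}
\item Restrict the minimization to the NTK ball, as is standard (the trained iterate stays in a ball of radius $\tau$). On that ball, $|\Delta h_s(\theta)|$ is bounded by $O(1)$ via the width conditions \eqref{eq:width_cond1}--\eqref{eq:width_cond4}. Then each summand is at least $-c/\rho$ for a constant $c$.
\item Rewrite each summand as $-\omega y\,\Delta h+\log(1+e^{\Delta h})\ge -(\omega y-1)_+|\Delta h|$, and bound $|\Delta h|$ the same way.
\end{itemize}
Either route gives $\frac{\lambda}{2}\|\theta_t-\theta_0\|_2^2\le c' t$. With $c'$ absorbed (the statement effectively takes the per-round contribution to be at most $1$), this yields $\|\theta_t-\theta_0\|_2^2\le 2t/\lambda$.

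\textbf{Step 3: recover the $1/m$ factor.} The stated $\tau$ carries a $1/m$, which comes from the NTK parameterization convention. In the scaling used in \eqref{eq:ndb_matrix} and Lemma~\ref{lem:neural_f_linear}, the regularizer is effectively $\frac{m\lambda}{2}\|\theta-\theta_0\|_2^2$, matching $\sqrt{m}\|\theta_f-\theta_0\|_2\le B$. I would make this scaling explicit, as in \citet{Zhou2020, Zhang2021, Verma2025NDB}. Repeating Step 2 with $m\lambda$ in place of $\lambda$ then gives $\|\theta_t-\theta_0\|_2\le\sqrt{2t/(m\lambda)}=\tau$, and monotonicity in $t$ gives the uniform statement for all $t\in[T]$.

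The delicate points are the negative IPW coefficients and confirming the $m$-scaling of the regularizer. Everything else is a one-line comparison.
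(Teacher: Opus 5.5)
Your skeleton is the paper's: compare with $\theta_0$, where $\mathcal{L}_t(\theta_0)=(t-1)\log 2$. The paper finishes in one line by asserting $\frac{\lambda}{2}\|\theta_t-\theta_0\|_2^2\le\mathcal{L}_t(\theta_t)$ and then dividing by $m\lambda$. You are right that neither move is justified for \eqref{eq:ndb_loss}. When $\omega_{s,t}y_s=1/\rho$, the summand equals $\log(1+e^{z})-z/\rho$ with $z=\Delta h_s(\theta)$; this is negative for large $z$ and unbounded below. And the regularizer in \eqref{eq:ndb_loss} is $\frac{\lambda}{2}$, not $\frac{m\lambda}{2}$. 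Your Step~3 rescaling (equivalently, a factor $\frac{1}{m}$ on the data term) is what the proof of Lemma~\ref{lem:neural_confidence} silently uses in \eqref{eq:neural_G_def} and \eqref{eq:neural_mle_foc}, so that part is a legitimate repair.

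The gap is Step~2, and it cannot be fixed by absorbing constants.

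\textbf{Route 1.} It either assumes the conclusion or breaks a later step. The ``standard'' fact that trained iterates stay in the $\tau$-ball is proved in the neural-bandit literature by essentially the nonnegativity argument that IPW destroys. If instead you impose the constraint, the lemma holds by fiat, but $\theta_t$ may sit on the boundary. Then \eqref{eq:neural_mle_foc}, on which $\text{(III)}=A_1+A_2$ rests, acquires a KKT multiplier term. The claim $|\Delta h_s|=O(1)$ on the ball is also wrong. By \eqref{eq:grad_bound}, $|\Delta h_s(\theta)|$ ranges up to order $\sqrt{m}\,\tau\approx\sqrt{t/\lambda}$, and the width conditions shrink only the linearization error, not this leading term.

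\textbf{Route 2.} It needs $|\Delta h_s(\theta_t)|\lesssim\sqrt{m}\,\|\theta_t-\theta_0\|_2$, which presupposes that $\theta_t$ already lies in the linearization regime. For depth $L\ge 3$ the situation is worse: positive homogeneity $h(x;c\theta)=c^{L}h(x;\theta)$ makes $\mathcal{L}_t$ unbounded below (e.g.\ at $t=2$ once $y_1=1$ has arrived), so the unconstrained $\argmin$ need not exist. Even granting localization, solving $\frac{m\lambda}{2}r^2\le t\log 2+C(\rho^{-1}-1)\,t\sqrt{m}\,r$ gives only $\sqrt{m}\,r\lesssim\sqrt{t/\lambda}+t/(\rho\lambda)$.

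\textbf{The extra term is real.} Take the linearized model $h(x;\theta)=\langle g(x;\theta_0),\theta-\theta_0\rangle$ with $t-1$ identical duels, $\|g(x_1;\theta_0)-g(x_2;\theta_0)\|_2=c\sqrt{m}$, and every $y_s=1$ arrived. The loss is strictly convex, and its minimizer is $\theta_t-\theta_0=\frac{(t-1)(1/\rho-\mu(z))}{m\lambda}\bigl(g(x_1;\theta_0)-g(x_2;\theta_0)\bigr)$ with $z=\Delta h(\theta_t)$. Hence $\sqrt{m}\,\|\theta_t-\theta_0\|_2\ge(t-1)c\,(1/\rho-1)/\lambda$, which exceeds $\sqrt{2t/\lambda}$ once $t\gtrsim 2\lambda/\bigl(c^2(1/\rho-1)^2\bigr)$.

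So no choice of $c'$ recovers $\tau=\sqrt{2t/(m\lambda)}$ when $\rho<1$. The bound as stated needs an extra hypothesis, for example $\lambda\gtrsim t/\rho^2$, clipped weights, or a separate localization argument for the actual training iterates. The paper's own proof has exactly this hole, hidden by the unjustified inequality $\frac{\lambda}{2}\|\theta_t-\theta_0\|_2^2\le\mathcal{L}_t(\theta_t)$.
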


\begin{proof}
Since $\theta_t$ minimizes $\mathcal{L}_t(\theta)$ defined in \eqref{eq:ndb_loss}, we have $\mathcal{L}_t(\theta_t) \le \mathcal{L}_t(\theta_0)$. From the regularization term:
\begin{equation}
    \frac{\lambda}{2}\|\theta_t - \theta_0\|_2^2 \le \mathcal{L}_t(\theta_t) \le \mathcal{L}_t(\theta_0).
\end{equation}
By Assumption~\ref{ass:neural_assumptions}~(B3) and the initialization scheme, $h(x; \theta_0) = 0$ for all $x$, so $\mu(h(x_{s,1};\theta_0) - h(x_{s,2};\theta_0)) = \mu(0) = 1/2$. Evaluating the loss at $\theta_0$:
\begin{align}
    \mathcal{L}_t(\theta_0)
    &= -\sum_{s=1}^{t-1}\left[\omega_{s,t} y_s \log\mu(0) + (1-\omega_{s,t} y_s)\log\mu(0)\right] + 0 \notag\\
    &= -\sum_{s=1}^{t-1} \log(1/2)
    = (t-1)\log 2
    \le t\log 2. \label{eq:loss_at_init}
\end{align}
Therefore, $\|\theta_t - \theta_0\|_2^2 \le \frac{2t\log 2}{m\lambda} \le \frac{2t}{m\lambda}$. Taking square roots yields the result.
\end{proof}

\begin{lemma}[Gradient Stability]
\label{lem:gradient_stability}
Under the width conditions \eqref{eq:width_cond2}--\eqref{eq:width_cond4}, for absolute constants $C_1, C_3 > 0$ and with probability at least $1-\delta$, the following hold for all $x \in \mathcal{X}_t$, $t \in [T]$:
\begin{align}
    \|g(x; \theta_t)\|_2 &\le C_3 \sqrt{m}\, L, \label{eq:grad_bound}\\
    \|g(x; \theta_0) - g(x; \theta_t)\|_2 &\le C_1\, m^{1/3} \sqrt{\log m} \left(\frac{t}{\lambda}\right)^{1/3} L^{7/2}. \label{eq:grad_diff_bound}
\end{align}
\end{lemma}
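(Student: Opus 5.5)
The plan is to derive both bounds from the standard NTK perturbation results for wide ReLU networks (Allen-Zhu et al.; Cao \& Gu), as used in \citet{Zhou2020} and \citet{Zhang2021}, after checking that the iterate $\theta_t$ stays in the ball where those results apply. The key input is Lemma~\ref{lem:neural_param_bound}. It gives $\|\theta_t - \theta_0\|_2 \le \tau$ with $\tau = \sqrt{2t/(m\lambda)}$, which is exactly the $O(\sqrt{t/(m\lambda)})$ radius the perturbation lemmas require. That lemma is purely deterministic: it only uses that $\theta_t$ minimizes $\mathcal{L}_t$ and that $h(\cdot;\theta_0)=0$. So the IPW weights $\omega_{s,t}$ and the delays play no role here, and the proof is essentially a transfer of the immediate-feedback argument.

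For \eqref{eq:grad_bound} I would use the gradient-norm bound at and near initialization (e.g.\ Lemma~B.6 of \citet{Zhang2021}, originally Cao \& Gu Lemma~B.3). It states that with probability at least $1-\delta$ over the random $\theta_0$, for every $\theta$ with $\|\theta-\theta_0\|_2\le\tau$ and every $x$ with $\|x\|_2=1$ (Assumption~\ref{ass:neural_assumptions}, B3), we have $\|g(x;\theta)\|_2 \le C_3\sqrt{m}L$. This holds provided $\tau$ lies in the admissible range $C m^{-3/2}L^{-3/2}(\log(TKL^2/\delta))^{3/2} \le \tau \le C L^{-6}(\log m)^{-3/2}$. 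I would then check that $\tau=\sqrt{2t/(m\lambda)}$ satisfies both ends. The upper end holds once $m(\log m)^{-3}\gtrsim T L^{12}/\lambda$, which follows from \eqref{eq:width_cond4}. The lower end holds trivially for large $m$, since $\tau\propto m^{-1/2}$ while the lower limit decays like $m^{-3/2}$.

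For \eqref{eq:grad_diff_bound} I would invoke the gradient-perturbation lemma (Cao \& Gu Theorem~5 / \citet{Zhang2021} Lemma~B.5). In the same $\tau$-ball it gives $\|g(x;\theta)-g(x;\theta_0)\|_2 \le C_1\sqrt{\log m}\,\tau^{1/3}L^{3}\|g(x;\theta_0)\|_2$, or an equivalent form. Combining this with $\|g(x;\theta_0)\|_2\le C\sqrt{m L}$ yields $C_1\sqrt{\log m}\,\tau^{1/3}L^{7/2}\sqrt{m}$. Substituting $\tau^{1/3}=(2t/(m\lambda))^{1/6}$ gives $m^{1/2-1/6}=m^{1/3}$ and $(t/\lambda)^{1/6}$. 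The stated exponent $(t/\lambda)^{1/3}$ dominates this whenever $t/\lambda\ge 1$; otherwise the constant can be adjusted. Finally, the events for all $TK$ contexts are combined with a union bound, which only affects the $\log(TK/\delta)$ factors already absorbed in \eqref{eq:width_cond1}.

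The main obstacle is bookkeeping: the cited perturbation lemmas hold only for $\tau$ in a specific window and only with high probability over initialization. I therefore need to verify that the width conditions \eqref{eq:width_cond2}--\eqref{eq:width_cond4} place $\tau$ inside that window uniformly over $t\le T$. I also need to track the $L$ and $\lambda$ exponents carefully, since the constants in the source lemmas differ slightly between references. I would first match against \citet{Zhang2021}, whose normalization ($\|\theta_t-\theta_0\|\le 2\sqrt{t/(m\lambda)}$) agrees with the $\tau$ used in \eqref{eq:width_cond2}.
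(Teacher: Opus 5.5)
Your proposal is essentially the paper's proof. The paper simply invokes Lemma~\ref{lem:neural_param_bound} to place $\theta_t$ in the perturbation ball and then cites Lemmas~B.5 and~B.6 of \citet{Zhang2021}. Your extra bookkeeping is more careful than the paper's one-liner, including the correct observation that $\tau=\sqrt{2t/(m\lambda)}$ really yields $(t/\lambda)^{1/6}$: the stated $(t/\lambda)^{1/3}$ then follows for $t\ge\lambda$, but otherwise only with a constant of order $\lambda^{1/6}$ rather than an absolute one.

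One correction to your aside: the IPW weights are not irrelevant to Lemma~\ref{lem:neural_param_bound}. When $\omega_{s,t}y_s=1/\rho>1$, the coefficient $1-\omega_{s,t}y_s$ is negative, and the weighted cross-entropy behaves like $-(1/\rho-1)\Delta h_s(\theta)$ as $\Delta h_s(\theta)\to\infty$. So the step $\frac{\lambda}{2}\|\theta_t-\theta_0\|_2^2\le\mathcal{L}_t(\theta_t)$ is not automatic. You are inheriting that lemma as given, exactly as the paper does.
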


\begin{proof}
Lemma~\ref{lem:neural_param_bound} guarantees $\|\theta_t - \theta_0\|_2 \le \tau = 2\sqrt{t/(m\lambda)}$, which satisfies the perturbation radius requirement of the gradient stability results in \citet[Lemmas~B.5 and B.6]{Zhang2021}. Their results then directly yield \eqref{eq:grad_bound} and \eqref{eq:grad_diff_bound}.
\end{proof}

\begin{lemma}[Linearization Error]
\label{lem:linearization_error}
Define $\varepsilon'_{m,t} \triangleq C_2\, m^{-1/6} \sqrt{\log m}\, L^3 (t/\lambda)^{4/3}$ for an absolute constant $C_2 > 0$. Then, for all $t \in [T]$ and $x, x' \in \mathcal{X}_t$:
\begin{equation}
\label{eq:linearization_error_bound}
    \bigl|\langle g(x;\theta_0) - g(x';\theta_0),\, \theta_t - \theta_0\rangle - (h(x;\theta_t) - h(x';\theta_t))\bigr| \le 2\varepsilon'_{m,t}.
\end{equation}
\end{lemma}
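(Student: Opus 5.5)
The plan is to reduce the pairwise statement to a single-arm linearization bound and then apply the triangle inequality. For any fixed $x \in \mathcal{X}_t$, write
\[
\langle g(x;\theta_0), \theta_t-\theta_0\rangle - h(x;\theta_t) = \bigl(h(x;\theta_0) + \langle g(x;\theta_0), \theta_t-\theta_0\rangle - h(x;\theta_t)\bigr) - h(x;\theta_0).
\]
By Assumption~\ref{ass:neural_assumptions}~(B3) and the symmetric initialization scheme of \citet{Zhou2020,Zhang2021}, we have $h(x;\theta_0)=0$. The quantity therefore reduces to the first-order Taylor remainder of the network around $\theta_0$. Subtracting the same identity for $x'$ and using $|a-b|\le|a|+|b|$ gives the pairwise quantity bounded by the sum of two single-arm remainders. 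This is where the factor $2$ in \eqref{eq:linearization_error_bound} comes from.

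The core step is to bound the single-arm remainder $|h(x;\theta_0)+\langle g(x;\theta_0),\theta_t-\theta_0\rangle - h(x;\theta_t)|$ by $\varepsilon'_{m,t}$. I would invoke the standard NTK linearization lemma (e.g.\ \citet[Lemma~B.4]{Zhang2021}, equivalently Lemma~4.1 of Cao and Gu as used in \citet{Zhou2020}). It states that for any $\theta$ with $\|\theta-\theta_0\|_2\le\tau$ and $\tau$ in the admissible range $m^{-3/2}L^{-3/2}(\log(TKL^2/\delta))^{3/2}\lesssim\tau\lesssim L^{-6}(\log m)^{-3/2}$, the remainder is at most $C\,\tau^{4/3}L^3\sqrt{m\log m}$ with high probability, uniformly over the $TK$ contexts. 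The admissibility of $\tau$ follows from Lemma~\ref{lem:neural_param_bound}, which gives $\|\theta_t-\theta_0\|_2\le\sqrt{2t/(m\lambda)}$, together with the width conditions \eqref{eq:width_cond2}--\eqref{eq:width_cond4}. Plugging $\tau\asymp\sqrt{t/(m\lambda)}$ in yields
\[
\tau^{4/3}\sqrt{m\log m}\,L^3 \asymp (t/(m\lambda))^{2/3} m^{1/2}\sqrt{\log m}\,L^3 = m^{-1/6}\sqrt{\log m}\,L^3 (t/\lambda)^{2/3}.
\]
This is bounded by $C_2 m^{-1/6}\sqrt{\log m}\,L^3(t/\lambda)^{4/3}$ whenever $t/\lambda\ge 1$; otherwise I would simply enlarge the constant or use the looser exponent as stated. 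In either case the result is at most $\varepsilon'_{m,t}$.

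The main obstacle is exactly this bookkeeping: checking that the drift radius from Lemma~\ref{lem:neural_param_bound} lies in the regime where the cited linearization lemma applies, and reconciling the exponent on $t/\lambda$. The cited lemma naturally gives $(t/\lambda)^{2/3}$, and the stated $4/3$ power is a weaker (valid for $t\ge\lambda$) upper bound. Uniformity over all $x\in\mathcal{X}_t$ and $t\in[T]$ comes from the union bound already built into the cited lemma, which is stated over the finite set of $TK$ contexts. The probability event can be merged with those of Lemmas~\ref{lem:neural_f_linear} and \ref{lem:gradient_stability}.
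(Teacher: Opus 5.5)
Your reduction is the same as the paper's: split the pairwise quantity with the triangle inequality into two single-arm Taylor remainders (using $h(\cdot;\theta_0)=0$), then bound each one with the NTK linearization lemma of \citet[Lemma~B.4]{Zhang2021} at the drift radius from Lemma~\ref{lem:neural_param_bound}. The paper only asserts that each remainder is at most $\varepsilon'_{m,t}$. You actually do the substitution, and that exposes the one step of your proposal that fails.

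With $\tau=\sqrt{2t/(m\lambda)}$ the lemma gives $C\,m^{-1/6}\sqrt{\log m}\,L^3(t/\lambda)^{2/3}$. This is dominated by the stated $(t/\lambda)^{4/3}$ only when $t\ge\lambda$. For $t<\lambda$ your fallback does not work. There the exponent $4/3$ gives the \emph{smaller} quantity, so ``use the looser exponent as stated'' is backwards. ``Enlarge the constant'' also fails, because the ratio $(\lambda/t)^{2/3}$ can be as large as $\lambda^{2/3}$, which is not an absolute constant. The cited lemma gives no better exponent when the drift is small. So for $\lambda>1$ and $t<\lambda$, the statement with an absolute $C_2$ is not established. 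You need an explicit fix, any of which suffices downstream (later proofs only use that $\varepsilon'_{m,t}$ decays polynomially in $m$):
\begin{itemize}
\item take $\lambda\le 1$, which is compatible with the paper's later requirement $\lambda>\kappa_\mu$ and gives $t/\lambda\ge 1$ for all $t\ge 1$;
\item let $C_2$ absorb $\max\{1,\lambda\}^{2/3}$;
\item define $\varepsilon'_{m,t}$ with $\max\{(t/\lambda)^{2/3},(t/\lambda)^{4/3}\}$.
\end{itemize}

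Separately, your argument, like the paper's, is only as good as the drift bound of Lemma~\ref{lem:neural_param_bound}. That proof has two problems that you inherit rather than introduce. First, the factor $1/m$ needs an $m\lambda$-scaled regularizer. With the $\frac{\lambda}{2}\|\theta-\theta_0\|_2^2$ of \eqref{eq:ndb_loss}, the same argument yields only $\|\theta_t-\theta_0\|_2=O(\sqrt{t/\lambda})$, and then $\tau^{4/3}\sqrt{m\log m}$ grows with $m$. Second, the step $\frac{\lambda}{2}\|\theta_t-\theta_0\|_2^2\le\mathcal{L}_t(\theta_t)$ assumes the data term is nonnegative. That fails when $\omega_{s,t}y_s=1/\rho>1$ makes the coefficient $1-\omega_{s,t}y_s$ negative, since the data term is then unbounded below. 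A self-contained proof should re-derive the admissible drift radius under the loss the algorithm actually minimizes.
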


\begin{proof}
By the triangle inequality, the left-hand side decomposes as:
\begin{align}
    &\bigl|\langle g(x;\theta_0) - g(x';\theta_0),\, \theta_t - \theta_0\rangle - (h(x;\theta_t) - h(x';\theta_t))\bigr| \notag\\
    &\le \bigl|\langle g(x;\theta_0), \theta_t - \theta_0\rangle - h(x;\theta_t)\bigr| + \bigl|h(x';\theta_t) - \langle g(x';\theta_0), \theta_t - \theta_0\rangle\bigr|. \label{eq:lin_err_triangle}
\end{align}
Each term is the single-point linearization error, which is bounded by $\varepsilon'_{m,t}$ using the gradient stability bounds from Lemma~\ref{lem:gradient_stability} combined with the Taylor remainder estimate (see \citealt[Lemma~B.4]{Zhang2021}). Summing the two terms yields $2\varepsilon'_{m,t}$.
\end{proof}

\subsection{Confidence Ellipsoid for the Neural Estimator}
\label{subsec:neural_confidence}

The following lemma is the neural counterpart of Lemma~\ref{lem:linear_confidence}. It establishes that the true reward function's ``linear proxy'' $\theta_f$ lies within a confidence ellipsoid around the estimated parameters $\theta_t$.

\begin{lemma}[Confidence Ellipsoid---Neural Setting]
\label{lem:neural_confidence}
Define $\beta_T \triangleq \frac{1}{\kappa_\mu}\sqrt{\tilde{d} + 2\log(1/\delta)}$ and recall the information matrix $V_{t-1} = \frac{\lambda}{\kappa_\mu}\mathbf{I} + \frac{1}{m}\sum_{s=1}^{t-1} \phi'_s (\phi'_s)^\top$, where $\phi'_s \triangleq g(x_{s,1};\theta_0) - g(x_{s,2};\theta_0)$. Under the width conditions \eqref{eq:width_conditions}, with probability at least $1-\delta$:
\begin{equation}
\label{eq:neural_confidence_bound}
    \sqrt{m}\,\|\theta_f - \theta_t\|_{V_{t-1}} \le \frac{\beta_T}{\rho} + B\sqrt{\frac{\lambda}{\kappa_\mu}} + 1 + \frac{M}{\kappa_\mu m \rho}, \quad \forall\, t \in [T].
\end{equation}
\end{lemma}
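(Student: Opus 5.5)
The plan is to mirror the proof of Lemma~\ref{lem:linear_confidence}, working in the NTK feature space. The scaled gradient differences $\phi'_s/\sqrt{m}$ play the role of $\widetilde{\phi}_s$, and the rescaled parameter $\sqrt{m}(\theta-\theta_0)$ plays the role of $\theta$. On the event of Lemma~\ref{lem:neural_f_linear}, the true preference difference is exactly linear: $f(x_{s,1})-f(x_{s,2}) = \langle \phi'_s, \theta_f-\theta_0\rangle$. The estimator, however, sees the network outputs $\Delta h_s(\theta_t)$. By Lemma~\ref{lem:linearization_error}, these differ from $\langle \phi'_s,\theta_t-\theta_0\rangle$ by at most $2\varepsilon'_{m,t}$.

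First I would define the auxiliary map
\[
G_t(\theta') \triangleq \sum_{s=1}^{t-1}\bigl(\mu(\langle \phi'_s,\theta'-\theta_0\rangle)-\mu(\langle \phi'_s,\theta_f-\theta_0\rangle)\bigr)\phi'_s + m\lambda(\theta'-\theta_0),
\]
rescaled so that the regulariser matches the $\frac{\lambda}{2}\|\theta-\theta_0\|^2$ in \eqref{eq:ndb_loss} after the factor $m$. The mean value theorem together with (A2) gives strong monotonicity with matrix $\kappa_\mu m V_{t-1}$, exactly as in \eqref{eq:linear_strong_convexity}. This yields
\[
\sqrt{m}\,\|\theta_f-\theta_t\|_{V_{t-1}} \le \frac{1}{\kappa_\mu\sqrt{m}}\,\bigl\|G_t(\theta_t)-G_t(\theta_f)\bigr\|_{V_{t-1}^{-1}},
\]
where $G_t(\theta_f)=m\lambda(\theta_f-\theta_0)$. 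Its contribution is at most $\frac{\sqrt{m}\lambda}{\kappa_\mu}\sqrt{\kappa_\mu/\lambda}\,\|\theta_f-\theta_0\| \le B\sqrt{\lambda/\kappa_\mu}$, which is the second term of the bound.

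Next I would use the first-order condition of \eqref{eq:ndb_loss}, with gradients $g(x;\theta_t)$ and outputs $\Delta h_s(\theta_t)$, to rewrite $G_t(\theta_t)$ via add-and-subtract as a sum of four pieces.
\begin{itemize}
\item Term~(I), the IPW martingale noise $\frac{1}{\rho}\sum_s\epsilon_s\phi'_s$ with $\epsilon_s=y_s\mathds{1}\{D_s\le M\}-\rho\mu(\tilde f_s)$. Theorem~\ref{thm:abbasi_yadkori} and the definition of $\tilde d$ bound it by $\beta_T/\rho$ after the $1/(\kappa_\mu\sqrt m)$ normalisation, using $\log\det(V_{t-1}\kappa_\mu/\lambda)\le\tilde d$ since the played pairs form a subset of those in $\mathbf H'$.
\item Term~(II), the delay bias over the last $M$ rounds, where $\mathds{1}\{D_s\le\min(M,t-s)\}\neq\mathds{1}\{D_s\le M\}$. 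This is bounded as in \eqref{eq:linear_bias_bound}.
\item Term~(III), the error from replacing $\mu(\Delta h_s(\theta_t))$ by $\mu(\langle\phi'_s,\theta_t-\theta_0\rangle)$. By (A1) this is at most $L_\mu\cdot2\varepsilon'_{m,t}$ per term.
\item Term~(IV), the error from replacing $g(\cdot;\theta_t)$ by $g(\cdot;\theta_0)$ in the optimality condition. This is controlled by \eqref{eq:grad_diff_bound} and \eqref{eq:grad_bound}.
\end{itemize}

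The width conditions \eqref{eq:width_cond2}--\eqref{eq:width_cond4} are designed to make Terms (III) and (IV) combined at most $1$ after normalisation, which produces the constant $1$ in the bound. Each bound uses $V_{t-1}^{-1}\preceq(\kappa_\mu/\lambda)\mathbf I$, a sum of at most $T$ terms, and the polynomial decay in $m$.

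The main obstacle is getting the delay-bias term to come out as $\frac{M}{\kappa_\mu m\rho}$. Naively, Term~(II) gives $\frac{1}{\kappa_\mu\rho\sqrt m}\sum_{s=t-M}^{t}\|\phi'_s\|_{V_{t-1}^{-1}}$ with $\|\phi'_s\|_2=O(\sqrt m)$, which is only $O(M/(\kappa_\mu\rho))$. To get the extra $1/m$, I would instead absorb these $M$ rounds through the normalisation $\frac{1}{\sqrt m}\phi'_s$ inside $V_{t-1}$. For rounds already included in $V_{t-1}$, the bound $\|\phi'_s/\sqrt m\|_{V_{t-1}^{-1}}\le1$ holds. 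I would then track the leftover $1/\sqrt m$ factors carefully. If the exact $1/m$ scaling does not come out, I would accept a bound of the form $\frac{M}{\kappa_\mu\rho}\cdot c_m$ with $c_m$ determined by the normalisation, since this term only affects the bound through the factor $M/\rho$.
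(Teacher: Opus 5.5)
\textbf{Where your plan matches the paper.} Everything except Term (II) follows the paper's proof. Your $G_t$ is $m$ times the paper's \eqref{eq:neural_G_def}, and your monotonicity matrix is $\kappa_\mu m V_{t-1}$. The regularizer gives $B\sqrt{\lambda/\kappa_\mu}$, and the self-normalized bound gives $\beta_T/\rho$ via $\log\det V_{t-1}-\log\det V_0\le\tilde d$. Your Terms (III)/(IV) are exactly the paper's $A_2$/$A_1$, absorbed into the constant $1$ by the width conditions.

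One bookkeeping point. Your cancellation against the first-order condition requires the loss to use $\frac{m\lambda}{2}\|\theta-\theta_0\|_2^2$. With the literal $\frac{\lambda}{2}$ of \eqref{eq:ndb_loss}, stationarity reads $\sum_s(\mu(\Delta h_s(\theta_t))-\omega_{s,t}y_s)\tilde{\phi}'_s+\lambda(\theta_t-\theta_0)=0$. That does not cancel the $m\lambda(\theta_t-\theta_0)$ in your $G_t$. The paper makes the same choice implicitly (see \eqref{eq:neural_mle_foc} and Lemma~\ref{lem:neural_param_bound}), so just state it.

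\textbf{The gap: Term (II).} Your proposed fix for Term (II) cannot work. After your normalisation, Term (II) contributes exactly
\[
\frac{1}{\kappa_\mu\rho}\Bigl\|\sum_{s\in S}c_s\,\frac{\phi'_s}{\sqrt m}\Bigr\|_{V_{t-1}^{-1}},\qquad c_s\in\{-1,0\},\quad |S|\le M .
\]
No power of $m$ is left over. The $1/\sqrt m$ is entirely spent cancelling $\|\phi'_s\|_2=\Theta(\sqrt m)$, since the quantities $\frac1m\langle g(x;\theta_0),g(x';\theta_0)\rangle$ converge to $\Theta(1)$ NTK values. So $\|\phi'_s/\sqrt m\|_{V_{t-1}^{-1}}\le 1$ gives $M/(\kappa_\mu\rho)$, and nothing $m$-dependent in general. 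As written, your argument proves the lemma only with $\frac{M}{\kappa_\mu m\rho}$ replaced by an $m$-free term.

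\textbf{The paper's own step has the same problem.} In \eqref{eq:neural_bias_bound}, the per-term bound the paper cites is $\|\phi'_s/\sqrt m\|_{V_{t-1}^{-1}}\le\sqrt{\kappa_\mu/\lambda}$. That bound yields $\frac{\sqrt m}{\kappa_\mu m\rho}\cdot M\sqrt m\,\sqrt{\kappa_\mu/\lambda}=\frac{M}{\rho\sqrt{\kappa_\mu\lambda}}$, not $\frac{M}{\kappa_\mu m\rho}$. The extra $1/m$ would require $\|\phi'_s/\sqrt m\|_{V_{t-1}^{-1}}\le 1/m$, which is false in general. So the $1/m$ in the statement is an arithmetic slip, and neither your route nor the paper's establishes the lemma as written.

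\textbf{What to do instead.} Do not hedge with an unspecified $c_m$. State and prove the lemma with last term $\frac{M}{\kappa_\mu\rho}$, or more sharply $\frac{\sqrt{M}}{\kappa_\mu\rho}$. For the sharper version, let $W$ be the matrix with columns $\phi'_s/\sqrt m$ for $s\in S$. These rounds are already included in $V_{t-1}$, so $V_{t-1}\succeq\frac{\lambda}{\kappa_\mu}\mathbf{I}+WW^\top$. Hence $W^\top V_{t-1}^{-1}W\preceq\mathbf{I}$, and $\|Wc\|_{V_{t-1}^{-1}}\le\|c\|_2\le\sqrt M$.

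This $m$-free term is also consistent with the paper's simplified rate $\tilde{O}(\frac{M}{\rho\kappa_\mu}\tilde d\sqrt T)$. If the term really carried a $1/m$, that rate would lose its $M$-dependence for large $m$.
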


\begin{proof}

For notational convenience, let $\phi'_s \triangleq g(x_{s,1}; \theta_0) - g(x_{s,2}; \theta_0)$ (gradient at initialization), $\tilde{\phi}'_s \triangleq g(x_{s,1}; \theta_t) - g(x_{s,2}; \theta_t)$ (gradient at current estimate), and $\tilde{h}_{s,t} \triangleq h(x_{s,1}; \theta_t) - h(x_{s,2}; \theta_t)$ (predicted preference difference).

Define the auxiliary function for any $\theta' \in \mathbb{R}^p$:
\begin{equation}
\label{eq:neural_G_def}
    G_t(\theta') \triangleq \frac{1}{m}\sum_{s=1}^{t-1} \bigl[\mu(\langle \theta' - \theta_0, \phi'_s\rangle) - \mu(\langle \theta_f - \theta_0, \phi'_s\rangle)\bigr]\phi'_s + \lambda(\theta' - \theta_0).
\end{equation}
By the same mean value theorem argument as in the linear case (using Assumption~\ref{ass:standard}~(A2)):
\begin{equation}
    G_t(\theta'_1) - G_t(\theta'_2) \succeq \kappa_\mu\, V_{t-1}\, (\theta'_1 - \theta'_2),
\end{equation}
where $V_{t-1} = \frac{1}{m}\sum_{s=1}^{t-1} \phi'_s(\phi'_s)^\top + \frac{\lambda}{\kappa_\mu}\mathbf{I}$.

Since $G_t(\theta_f) = \lambda(\theta_f - \theta_0)$ by construction, the same argument as \eqref{eq:linear_error_decomp} yields:
\begin{equation}
\label{eq:neural_error_decomp}
    \|\theta_f - \theta_t\|_{V_{t-1}}
    \le \frac{1}{\kappa_\mu}\|G_t(\theta_t)\|_{V_{t-1}^{-1}} + \frac{1}{\kappa_\mu}\|\lambda(\theta_f - \theta_0)\|_{V_{t-1}^{-1}}.
\end{equation}

The regularization term is bounded using Lemma~\ref{lem:neural_f_linear}:
\begin{equation}
\label{eq:neural_reg_bound}
    \frac{1}{\kappa_\mu}\|\lambda(\theta_f - \theta_0)\|_{V_{t-1}^{-1}}
    \le \sqrt{\frac{\lambda}{\kappa_\mu}}\,\|\theta_f - \theta_0\|_2
    \le \sqrt{\frac{\lambda}{\kappa_\mu}}\,\frac{B}{\sqrt{m}}.
\end{equation}

Denoting $f_{t,s} \triangleq \langle \theta_t - \theta_0, \phi'_s \rangle$ (linearized prediction) and using the identity $\mu(\langle \theta_f - \theta_0, \phi'_s\rangle) = \mu(f(x_{s,1}) - f(x_{s,2}))$ from Lemma~\ref{lem:neural_f_linear}, we expand $G_t(\theta_t)$:
\begin{align}
    G_t(\theta_t)
    &= \frac{1}{m}\sum_{s=1}^{t-1} \bigl[\mu(f_{t,s}) - \mu(f(x_{s,1}) - f(x_{s,2}))\bigr]\phi'_s + \lambda(\theta_t - \theta_0). \label{eq:neural_Gt_raw}
\end{align}

Using the observation model $\mu(f(x_{s,1}) - f(x_{s,2})) = \frac{1}{\rho}(y_s\mathds{1}\{D_s \le M\} - \epsilon_s)$ and adding/subtracting the IPW-weighted observation $\frac{1}{\rho}y_s\mathds{1}\{D_s \le \min(M, t-s-1)\}$, we obtain:
\begin{align}
    G_t(\theta_t) &= \frac{1}{m\rho}\sum_{s=1}^{t-1} \epsilon_s\, \phi'_s
    + \frac{1}{m\rho}\sum_{s=1}^{t-1} y_s\bigl(\mathds{1}\{D_s \le \min(M,t\!-\!s\!-\!1)\} - \mathds{1}\{D_s \le M\}\bigr)\phi'_s \notag\\
    &\quad + \frac{1}{m}\sum_{s=1}^{t-1}\bigl(\mu(f_{t,s}) - \frac{1}{\rho}y_s\mathds{1}\{D_s \le \min(M,t\!-\!s\!-\!1)\}\bigr)\phi'_s + \lambda(\theta_t - \theta_0). \label{eq:neural_Gt_decomp}
\end{align}
We refer to the three groups above as Term~(I) (martingale noise), Term~(II) (delay bias), and Term~(III) (MLE residual with linearization error), respectively.

The first-order optimality condition for the NDB-DF loss \eqref{eq:ndb_loss} states:
\begin{equation}
\label{eq:neural_mle_foc}
    \frac{1}{m}\sum_{s=1}^{t-1}\bigl(\mu(\tilde{h}_{s,t}) - \frac{1}{\rho}y_s\mathds{1}\{D_s \le \min(M, t-s-1)\}\bigr)\tilde{\phi}'_s + \lambda(\theta_t - \theta_0) = 0.
\end{equation}
Note that this condition involves $\tilde{\phi}'_s$ (gradient at $\theta_t$) and $\tilde{h}_{s,t}$ (network output at $\theta_t$), whereas Term~(III) involves $\phi'_s$ (gradient at $\theta_0$) and $f_{t,s}$ (linearized output). We bridge this gap by introducing two auxiliary error terms.

Add and subtract $\tilde{\phi}'_s$ and $\mu(\tilde{h}_{s,t})$ inside Term~(III):
\begin{align}
    \text{(III)} &= \frac{1}{m}\sum_{s=1}^{t-1}\bigl(\mu(f_{t,s}) - \frac{1}{\rho}y_s\mathds{1}\{D_s \le \min(M, t-s-1)\}\bigr)(\phi'_s - \tilde{\phi}'_s) \notag\\
    &\quad + \frac{1}{m}\sum_{s=1}^{t-1}\bigl(\mu(f_{t,s}) - \mu(\tilde{h}_{s,t})\bigr)\tilde{\phi}'_s \notag\\
    &\quad + \frac{1}{m}\sum_{s=1}^{t-1}\bigl(\mu(\tilde{h}_{s,t}) - \frac{1}{\rho}y_s\mathds{1}\{D_s \le \min(M, t-s-1)\}\bigr)\tilde{\phi}'_s + \lambda(\theta_t - \theta_0) \notag\\
    &\triangleq A_1 + A_2 + 0, \label{eq:neural_A1_A2}
\end{align}
where the last line equals zero by the MLE first-order condition~\eqref{eq:neural_mle_foc}. Therefore, $\text{(III)} = A_1 + A_2$.

We bound each term separately:

For $\|A_1\|_2$,
Using $|\mu(f_{t,s}) - \frac{1}{\rho}y_s\mathds{1}\{D_s \le \min(M, t-s-1)\}| \le 1$ and Lemma~\ref{lem:gradient_stability}:
\begin{align}
    \|A_1\|_2
    &\le \frac{1}{m}\sum_{s=1}^{t-1} \|\phi'_s - \tilde{\phi}'_s\|_2 \notag\\
    &\le \frac{1}{m}\sum_{s=1}^{t-1} 2C_1\, m^{1/3}\sqrt{\log m}\left(\frac{t}{\lambda}\right)^{1/3} L^{7/2} \notag\\
    &= 2C_1\, m^{-2/3}\sqrt{\log m}\, t^{4/3}\, \lambda^{-1/3} L^{7/2}. \label{eq:A1_bound}
\end{align}

For $\|A_2\|_2$,
By the Lipschitz property of $\mu$ (with constant $L_\mu$) and Lemma~\ref{lem:linearization_error}:
\begin{align}
    |\mu(f_{t,s}) - \mu(\tilde{h}_{s,t})|
    &\le L_\mu |f_{t,s} - \tilde{h}_{s,t}|
    \le 2L_\mu\, C_2\, m^{-1/6}\sqrt{\log m}\, L^3\left(\frac{t}{\lambda}\right)^{4/3}. \label{eq:mu_diff_bound}
\end{align}
Combined with $\|\tilde{\phi}'_s\|_2 \le 2C_3\sqrt{m}\, L$ from \eqref{eq:grad_bound}:
\begin{align}
    \|A_2\|_2
    &\le \frac{1}{m}\sum_{s=1}^{t-1} |\mu(f_{t,s}) - \mu(\tilde{h}_{s,t})| \cdot \|\tilde{\phi}'_s\|_2 \notag\\
    &\le 4L_\mu C_2 C_3\, m^{-2/3}\sqrt{\log m}\, t^{7/3}\, L^{7/2}\, \lambda^{-4/3}. \label{eq:A2_bound}
\end{align}

Since $V_{t-1} \succeq \frac{\lambda}{\kappa_\mu}\mathbf{I}$ implies $\|v\|_{V_{t-1}^{-1}} \le \sqrt{\kappa_\mu/\lambda}\,\|v\|_2$, we have:
\begin{align}
    \frac{1}{\kappa_\mu}\|\text{(III)}\|_{V_{t-1}^{-1}}
    &\le \frac{1}{\kappa_\mu}\sqrt{\frac{\kappa_\mu}{\lambda}}\bigl(\|A_1\|_2 + \|A_2\|_2\bigr). \label{eq:term_III_bound}
\end{align}
Under the width conditions \eqref{eq:width_cond2}--\eqref{eq:width_cond4}, both $\|A_1\|_2$ and $\|A_2\|_2$ are $o(1/\sqrt{m})$, so their combined contribution satisfies:
\begin{equation}
\label{eq:term_III_final}
    \frac{\sqrt{m}}{\kappa_\mu}\|\text{(III)}\|_{V_{t-1}^{-1}} \le 1.
\end{equation}

The noise $\epsilon_s = y_s\mathds{1}\{D_s \le M\} - \mu(f(x_{s,1}) - f(x_{s,2}))\rho$ is conditionally $1$-sub-Gaussian (since $|\epsilon_s| \le 1$ and $\E[\epsilon_s \mid \mathcal{F}_{s-1}] = 0$). The vectors $\frac{1}{\sqrt{m}}\phi'_s$ are $\mathcal{F}_{s-1}$-measurable. Applying Theorem~\ref{thm:abbasi_yadkori} to the sequence $\{\epsilon_s, \frac{1}{\sqrt{m}}\phi'_s\}$:
\begin{equation}
\label{eq:neural_noise_bound}
    \left\| \sum_{s=1}^{t-1} \epsilon_s \frac{\phi'_s}{\sqrt{m}} \right\|_{V_{t-1}^{-1}}^2
    \le 2\log(1/\delta) + \log\frac{\det(V_{t-1})}{\det(V_0)}.
\end{equation}
Using the effective dimension bound: $\log\frac{\det(V_{t-1})}{\det(V_0)} \le \log\det\bigl(\frac{\kappa_\mu}{\lambda}\mathbf{H}' + \mathbf{I}\bigr) = \tilde{d}$, we obtain:
\begin{equation}
\label{eq:neural_noise_final}
    \frac{1}{\rho\kappa_\mu}\left\| \sum_{s=1}^{t-1} \epsilon_s \frac{\phi'_s}{\sqrt{m}} \right\|_{V_{t-1}^{-1}}
    \le \frac{1}{\rho\kappa_\mu}\sqrt{\tilde{d} + 2\log(1/\delta)}
    = \frac{\beta_T}{\rho}.
\end{equation}

By the same argument as in the linear setting, for $s < t - 1 - M$ the difference of indicators vanishes. At most $M$ terms contribute, each bounded by $\|\phi'_s/\sqrt{m}\|_{V_{t-1}^{-1}} \le \sqrt{\kappa_\mu/\lambda}$. Therefore:
\begin{equation}
\label{eq:neural_bias_bound}
    \frac{\sqrt{m}}{\kappa_\mu m\rho}\left\| \sum_{s} y_s\bigl(\mathds{1}\{D_s \le \min(M,t-s-1)\} - \mathds{1}\{D_s \le M\}\bigr)\phi'_s \right\|_{V_{t-1}^{-1}} \le \frac{M}{\kappa_\mu m\rho}.
\end{equation}

Multiplying \eqref{eq:neural_error_decomp} by $\sqrt{m}$ and combining the above bounds:
\begin{align}
    \sqrt{m}\,\|\theta_f - \theta_t\|_{V_{t-1}}
    &\le \frac{\beta_T}{\rho} + \frac{M}{\kappa_\mu m\rho} + 1 + B\sqrt{\frac{\lambda}{\kappa_\mu}},
\end{align}
which completes the proof.
\end{proof}

\subsection{Pointwise Estimation Error and Regret Bound}

We can now prove the main theorem for the neural setting.

\begin{theorem}[Neural Pointwise Error and Regret---Restatement of Theorem~\ref{thm:neural_regret}]
\label{thm:neural_regret_app}
Under the width conditions \eqref{eq:width_conditions} and Assumptions~\ref{ass:standard}--\ref{ass:neural_assumptions}, with probability at least $1-\delta$, for all $x, x' \in \mathcal{X}_t$ and $t \in [T]$:
\begin{equation}
\label{eq:neural_pointwise}
    |[f(x) - f(x')] - [h(x;\theta_t) - h(x';\theta_t)]|
    \le \nu_T\, \sigma_{t-1}(x, x') + 2\varepsilon'_{m,t},
\end{equation}
where $\nu_T = (\beta_T/\rho + B\sqrt{\lambda/\kappa_\mu} + 1 + M/(\kappa_\mu m\rho)) \cdot \kappa_\mu/\lambda$ and $\sigma_{t-1}$ is defined in \eqref{eq:ndb_bonus}. Furthermore, the cumulative regret satisfies:
\begin{equation}
    R_T = \tilde{O}\!\left(\left(\frac{\sqrt{\tilde{d}}}{\rho\kappa_\mu} + B\sqrt{\frac{\lambda}{\kappa_\mu}} + \frac{M}{\kappa_\mu m\rho}\right)\sqrt{\tilde{d}\, T}\right)
    = \tilde{O}\!\left(\frac{M}{\rho\kappa_\mu}\,\tilde{d}\sqrt{T}\right).
\end{equation}
\end{theorem}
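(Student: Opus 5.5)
The plan has two parts: a pointwise error bound from Lemma~\ref{lem:neural_confidence}, then the same regret chain used for Theorem~\ref{thm:linear_regret_app}.

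\textbf{Pointwise bound.} Fix $t$ and $x,x'\in\mathcal{X}_t$, and work on the intersection of the high-probability events of Lemmas~\ref{lem:neural_f_linear}, \ref{lem:gradient_stability} and \ref{lem:neural_confidence}; rescale $\delta$ by a union bound.

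By Lemma~\ref{lem:neural_f_linear}, $f(x)-f(x') = \langle g(x;\theta_0)-g(x';\theta_0),\,\theta_f-\theta_0\rangle$. I add and subtract $\langle g(x;\theta_0)-g(x';\theta_0),\,\theta_t-\theta_0\rangle$. This splits the error into two pieces:
\begin{itemize}
\item a linear piece $\langle g(x;\theta_0)-g(x';\theta_0),\,\theta_f-\theta_t\rangle$;
\item a linearization piece, which Lemma~\ref{lem:linearization_error} bounds by $2\varepsilon'_{m,t}$.
\end{itemize}
For the linear piece I apply Cauchy--Schwarz in the $V_{t-1}$ geometry:
\[
\Bigl|\Bigl\langle \tfrac{1}{\sqrt m}\bigl(g(x;\theta_0)-g(x';\theta_0)\bigr),\ \sqrt m(\theta_f-\theta_t)\Bigr\rangle\Bigr|
\le \sqrt m\|\theta_f-\theta_t\|_{V_{t-1}}\ \Bigl\|\tfrac{1}{\sqrt m}\bigl(g(x;\theta_0)-g(x';\theta_0)\bigr)\Bigr\|_{V_{t-1}^{-1}} .
\]
The first factor is bounded by Lemma~\ref{lem:neural_confidence}. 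The second factor equals $\sqrt{\kappa_\mu/\lambda}\,\sigma_{t-1}(x,x')$ by \eqref{eq:ndb_bonus}. The product is then at most $\nu_T\sigma_{t-1}(x,x')$, since $\sqrt{\kappa_\mu/\lambda}\le\kappa_\mu/\lambda$ whenever $\lambda\le\kappa_\mu$. If $\lambda>\kappa_\mu$ this step fails, and I would instead absorb the ratio into the constant in $\nu_T$.

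\textbf{Per-round regret.} I repeat steps (a)--(d) of Theorem~\ref{thm:linear_regret_app}, replacing $\theta_t^\top\phi(\cdot)$ with $h(\cdot;\theta_t)$ and the linear bonus with $\nu_T\sigma_{t-1}$. Write $2r_t=[f(x_t^*)-f(x_{t,1})]+[f(x_t^*)-f(x_{t,2})]$.
\begin{enumerate}
\item Apply the pointwise bound to both brackets.
\item Split $\sigma_{t-1}(x_t^*,x_{t,2})\le\sigma_{t-1}(x_t^*,x_{t,1})+\sigma_{t-1}(x_{t,1},x_{t,2})$ by the triangle inequality for the norm.
\item Use the second-arm rule \eqref{eq:ndb_arm2}, comparing against $x_t^*$, to replace the $x_t^*$ terms with $x_{t,2}$ terms.
\item Use the greedy first-arm rule \eqref{eq:ndb_arm1} to drop the resulting nonpositive $h(x_{t,2};\theta_t)-h(x_{t,1};\theta_t)$.
\end{enumerate}
This gives $2r_t\le 3\nu_T\sigma_{t-1}(x_{t,1},x_{t,2})+O(\varepsilon'_{m,t})$.

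\textbf{Summation.} By Cauchy--Schwarz, $\sum_t\sigma_{t-1}\le\sqrt{T\sum_t\sigma_{t-1}^2}$. Since $\sigma_{t-1}^2=\frac{\lambda}{\kappa_\mu}\|\phi'_t/\sqrt m\|^2_{V_{t-1}^{-1}}$, Lemma~\ref{lem:info_gain_bound} (elliptical potential) bounds the sum of squares by $2\log\det V_T/\det V_0$. In turn, $\log\det V_T/\det V_0\le\tilde d$, because $\frac1m\sum_t\phi'_t\phi_t'^\top\preceq\mathbf{H}'$: each played pair is one of the $\binom K2$ pairs entering $\mathbf{H}'$.

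The cumulative linearization error is $\sum_t\varepsilon'_{m,t}=O(m^{-1/6}\sqrt{\log m}\,T^{7/3})$. Conditions \eqref{eq:width_cond2}--\eqref{eq:width_cond4} make this $O(1)$.

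Collecting terms, $R_T=\tilde O\bigl(\nu_T\sqrt{\lambda/\kappa_\mu}\sqrt{\tilde dT}\bigr)$. Substituting $\beta_T=\kappa_\mu^{-1}\sqrt{\tilde d+2\log(1/\delta)}$ gives the first form of the bound, and $M\ge1$, $\rho\le1$ gives the simplified form.

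\textbf{Main obstacle.} The delicate step is the elliptical-potential sum. Lemma~\ref{lem:info_gain_bound} requires each summand $\|\phi'_t/\sqrt m\|^2_{V_{t-1}^{-1}}\le1$. This holds if $\lambda\ge\kappa_\mu\|\phi'_t\|^2/m$, and by Lemma~\ref{lem:gradient_stability} $\|\phi'_t\|^2/m=O(L^2)$, so I would assume $\lambda\ge C\kappa_\mu L^2$. Otherwise I would use $\min(1,\cdot)$ together with the boundedness of $r_t\le4$ by (B1).

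A second point to watch is the normalization of $\nu_T$: the factor $\kappa_\mu/\lambda$ together with the $\sqrt{\lambda/\kappa_\mu}$ inside $\sigma_{t-1}$ must combine to give exactly the stated constants. I expect this to need some care, possibly by absorbing $\lambda$-dependent factors into $\tilde O$.
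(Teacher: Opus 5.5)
Your plan follows the paper's proof step for step: the same split of the pointwise error into a $\theta_f-\theta_t$ term (Cauchy--Schwarz plus Lemma~\ref{lem:neural_confidence}) and a linearization term, the same regret chain carried over from the linear proof, and the same elliptical-potential summation with $\log\det V_T/\det V_0\le\tilde{d}$. You correctly flagged the normalization mismatch, but the remedy you propose for it does not work.

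Let $\Gamma$ denote the right-hand side of \eqref{eq:neural_confidence_bound}. Cauchy--Schwarz gives the width $\Gamma\sqrt{\kappa_\mu/\lambda}\,\sigma_{t-1}(x,x')$, whereas $\nu_T\sigma_{t-1}=\Gamma(\kappa_\mu/\lambda)\,\sigma_{t-1}$. For $\lambda>\kappa_\mu$ you cannot absorb the ratio into $\nu_T$. That coefficient is hard-wired into the second-arm rule \eqref{eq:ndb_arm2}. Your step 3 is valid only if the coefficient of $\sigma_{t-1}$ in the pointwise bound is at most the one the algorithm actually uses. When the width exceeds the bonus by the factor $\sqrt{\lambda/\kappa_\mu}$, the comparison with $x_t^*$ leaves a residual proportional to $\sigma_{t-1}(x_t^*,x_{t,1})$ at the unplayed arm. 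The elliptical potential cannot sum that residual, so this is under-exploration, not a constant-factor loss.

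In fact the displayed pointwise inequality is false for large $\lambda$. At $t=1$ the sum in \eqref{eq:ndb_loss} is empty, so $\theta_1=\theta_0$ and $h(\cdot;\theta_1)\equiv 0$. Also $\sigma_0(x,x')=\|g(x;\theta_0)-g(x';\theta_0)\|_2/\sqrt{m}$ does not depend on $\lambda$. Meanwhile $\nu_T=B\sqrt{\kappa_\mu/\lambda}+O(1/\lambda)$ (note $\beta_T$ is nonincreasing in $\lambda$ through $\tilde{d}$) and $\varepsilon'_{m,1}\propto\lambda^{-4/3}$ both vanish as $\lambda$ grows. The width conditions only get easier as $\lambda$ increases. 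So the right-hand side tends to $0$ while the left-hand side is $|f(x)-f(x')|$, whenever $f$ is not constant on $\mathcal{X}_1$.

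The paper's proof does not resolve this either. It writes the factor as $\frac{\lambda/\kappa_\mu}{\sqrt{\lambda/\kappa_\mu}}$ and then equates the product with $\nu_T\sigma_{t-1}$, which conflates $\sqrt{\lambda/\kappa_\mu}$ with $\kappa_\mu/\lambda$. Its regret chain then uses $3\nu_T\|\phi'_{t,1:2}/\sqrt{m}\|_{V_{t-1}^{-1}}$ instead of $3\nu_T\sigma_{t-1}$, and drops $\kappa_\mu/\lambda$ at the end. So no amount of care will make the constants combine as stated.

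Your argument legitimately proves one of two results.
\begin{itemize}
\item The statement as written, restricted to $\lambda\le\kappa_\mu$. There your condition $\lambda\ge C\kappa_\mu L^2$ is typically unavailable, so you need the $\min(1,\cdot)$ form of the potential lemma together with $r_t\le 4$. Moreover, your $\tilde{O}(\nu_T\sqrt{\lambda/\kappa_\mu}\sqrt{\tilde{d}T})$ exceeds the first displayed regret form by the factor $\sqrt{\kappa_\mu/\lambda}\ge 1$; it does not reduce to it by substitution.
\item The statement with $\kappa_\mu/\lambda$ in $\nu_T$ replaced by $\sqrt{\kappa_\mu/\lambda}$. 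Then $\nu_T\sigma_{t-1}$ equals the Cauchy--Schwarz width for every $\lambda$, your chain goes through verbatim, and the summation gives exactly the first displayed bound.
\end{itemize}
Separately, passing to $\tilde{O}(\frac{M}{\rho\kappa_\mu}\tilde{d}\sqrt{T})$ also requires treating $B\sqrt{\lambda/\kappa_\mu}$ as a constant; $M\ge 1$ and $\rho\le 1$ alone are not enough.
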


\begin{proof}

Using the linearization from Lemma~\ref{lem:neural_f_linear}, $f(x) - f(x') = \langle \phi'_{x,x'},\, \theta_f - \theta_0\rangle$ where $\phi'_{x,x'} \triangleq g(x;\theta_0) - g(x';\theta_0)$. By the triangle inequality and Lemma~\ref{lem:linearization_error}:
\begin{align}
    &|f(x) - f(x') - (h(x;\theta_t) - h(x';\theta_t))| \notag\\
    &\le |\langle \phi'_{x,x'},\, \theta_f - \theta_t\rangle| + |\langle \phi'_{x,x'},\, \theta_t - \theta_0\rangle - (h(x;\theta_t) - h(x';\theta_t))|. \label{eq:neural_pw_decomp}
\end{align}
The first term is controlled by the confidence ellipsoid (Lemma~\ref{lem:neural_confidence}), and the second term is bounded by the linearization error $2\varepsilon'_{m,t}$ (Lemma~\ref{lem:linearization_error}). For the first term, applying Cauchy--Schwarz and Lemma~\ref{lem:neural_confidence}:
\begin{align}
    |\langle \phi'_{x,x'},\, \theta_f - \theta_t\rangle|
    &= \left|\left\langle \frac{\phi'_{x,x'}}{\sqrt{m}},\, \sqrt{m}(\theta_f - \theta_t)\right\rangle\right| \notag\\
    &\le \left\|\frac{\phi'_{x,x'}}{\sqrt{m}}\right\|_{V_{t-1}^{-1}} \cdot \sqrt{m}\,\|\theta_f - \theta_t\|_{V_{t-1}} \notag\\
    &\le \sigma_{t-1}(x,x') \cdot \frac{\lambda/\kappa_\mu}{\sqrt{\lambda/\kappa_\mu}} \cdot \left(\frac{\beta_T}{\rho} + B\sqrt{\frac{\lambda}{\kappa_\mu}} + 1 + \frac{M}{\kappa_\mu m\rho}\right) \notag\\
    &= \nu_T\, \sigma_{t-1}(x, x'). \label{eq:neural_cs_bound}
\end{align}
Combining \eqref{eq:neural_pw_decomp} and \eqref{eq:neural_cs_bound} yields \eqref{eq:neural_pointwise}.

The analysis mirrors the linear case. For the instantaneous regret $2r_t = f(x_t^*) - f(x_{t,1}) + f(x_t^*) - f(x_{t,2})$, we use the pointwise bound \eqref{eq:neural_pointwise} (step~(a)), the triangle inequality (step~(b)), the Lemma~\ref{lem:linearization_error} to relate the linearized inner product to the network output (step~(c)), and the arm selection rules (steps~(d) and~(e)):
\begin{align}
    2r_t &\le (h(x_{t,2};\theta_t) - h(x_{t,1};\theta_t)) + 3\nu_T\left\|\frac{\phi'_{t,1:2}}{\sqrt{m}}\right\|_{V_{t-1}^{-1}} + 6\varepsilon'_{m,t} \notag\\
    &\le 3\nu_T \left\|\frac{\phi'_{t,1:2}}{\sqrt{m}}\right\|_{V_{t-1}^{-1}} + 6\varepsilon'_{m,t}, \label{eq:neural_rt_final}
\end{align}
where $\phi'_{t,1:2} \triangleq g(x_{t,1};\theta_0) - g(x_{t,2};\theta_0)$ and the last step uses $h(x_{t,1};\theta_t) \ge h(x_{t,2};\theta_t)$ from the first-arm selection rule. The detailed steps (a)--(e) follow the same logical chain as in the linear proof (\Cref{subsec:linear_main_proof}), with the addition of Lemma~\ref{lem:linearization_error} to handle the gap between $f_{t,s}$ and $h(\cdot;\theta_t)$.

Under the width conditions, $6T\varepsilon'_{m,T} \le 1$ (the linearization errors are negligible when the network is sufficiently wide). Therefore:
\begin{align}
    R_T
    &\le \frac{3}{2}\nu_T \sum_{t=1}^T \left\|\frac{\phi'_{t,1:2}}{\sqrt{m}}\right\|_{V_{t-1}^{-1}} + 3T\varepsilon'_{m,T} \notag\\
    &\le \frac{3}{2}\nu_T \sqrt{T \sum_{t=1}^T \left\|\frac{\phi'_{t,1:2}}{\sqrt{m}}\right\|_{V_{t-1}^{-1}}^2} + 1. \label{eq:neural_RT_cs}
\end{align}
By Lemma~\ref{lem:info_gain_bound} (applied with the NTK features $\frac{1}{\sqrt{m}}\phi'_{t,1:2}$):
\begin{equation}
    \sum_{t=1}^T \left\|\frac{\phi'_{t,1:2}}{\sqrt{m}}\right\|_{V_{t-1}^{-1}}^2 \le 2\tilde{d},
\end{equation}
where we used $\log\frac{\det(V_T)}{\det(V_0)} \le \tilde{d}$ and verified the unit bound condition $\|\frac{1}{\sqrt{m}}\phi'_{t,1:2}\|_{V_{t-1}^{-1}}^2 \le 1$ (which follows from $\lambda/\kappa_\mu > 1$ and $\frac{1}{m}\|\phi'_{t,1:2}\|_2^2 \le c_0$ for a constant $c_0$).

Substituting and using $\beta_T = \frac{1}{\kappa_\mu}\sqrt{\tilde{d} + 2\log(1/\delta)}$:
\begin{align}
    R_T
    &\le \frac{3}{2}\left(\frac{\beta_T}{\rho} + B\sqrt{\frac{\lambda}{\kappa_\mu}} + 1 + \frac{M}{\kappa_\mu m\rho}\right)\frac{\kappa_\mu}{\lambda}\sqrt{2\tilde{d}\, T} + 1 \notag\\
    &= \tilde{O}\!\left(\left(\frac{\sqrt{\tilde{d}}}{\rho\kappa_\mu} + B\sqrt{\frac{\lambda}{\kappa_\mu}} + \frac{M}{\kappa_\mu m\rho}\right)\sqrt{\tilde{d}\, T}\right). \label{eq:neural_RT_final}
\end{align}
When the width $m$ is sufficiently large (absorbing the $1/m$ factor), the dominant term is $\frac{\sqrt{\tilde{d}}}{\rho\kappa_\mu}\sqrt{\tilde{d}\, T} = \frac{\tilde{d}\sqrt{T}}{\rho\kappa_\mu}$, multiplied by the delay factor. This yields the simplified bound $R_T = \tilde{O}(\frac{M}{\rho\kappa_\mu}\,\tilde{d}\sqrt{T})$, completing the proof.
\end{proof}

\end{document}